\newtheorem{assumption}[theorem]{Assumption}
\newcommand{\sun}{\xi}
\newcommand{\argmin}{\mathop{\mathrm{arg\,min\,\,}}}
\newcommand{\R}{\mathbb{R}}
\newcommand{\Rd}{\mathbb{R}^d}
\newcommand{\su}{\mathsf{u}}
\newcommand{\sv}{\mathsf{v}}
\newcommand{\se}{\mathsf{e}}
\newcommand{\Bigo}{\mathcal{O}}
\newcommand{\dtfrac}[1]{\frac{d #1}{dt}}
\definecolor{darkred}{rgb}{.7,0,0}
\begin{document}

\title{Continuous Time Analysis of Momentum Methods}

\author{\name Nikola B. Kovachki \email nkovachki@caltech.edu \\
       \addr Computing and Mathematical Sciences\\
       California Institute of Technology\\
       Pasadena, CA 91125, USA
       \AND
       \name Andrew M. Stuart \email astuart@caltech.edu \\
       \addr Computing and Mathematical Sciences\\
       California Institute of Technology\\
       Pasadena, CA 91125, USA}

\editor{Suvrit Sra}

\maketitle

\begin{abstract}
Gradient descent-based optimization methods underpin the parameter training
of neural networks, and hence comprise a significant component in the
impressive test results found in a number of applications.
Introducing stochasticity is key to their success in practical problems, 
and there is some understanding
of the role of stochastic gradient descent in this context. Momentum modifications of gradient descent 
such as Polyak's Heavy Ball method (HB) and Nesterov's method of accelerated 
gradients (NAG), are also widely adopted. In this work our focus is on
understanding the role of momentum in the training of neural networks, concentrating
on the
common situation in which the momentum contribution is fixed at each step of the
algorithm. To expose 
the ideas simply we work in the deterministic setting. 

Our approach is to derive continuous time approximations of the
discrete algorithms; these continuous time approximations provide
insights into the mechanisms at play within the discrete algorithms.
We prove three such approximations. Firstly we show that standard 
implementations of fixed momentum methods approximate a time-rescaled
gradient descent flow, asymptotically as the learning rate shrinks to
zero; this result does not distinguish momentum methods from pure
gradient descent, in the limit of vanishing learning rate.
We then proceed to prove two results aimed at understanding 
the observed practical advantages of fixed momentum methods over 
gradient descent,
when implemented in the non-asymptotic regime with fixed small, but
non-zero, learning rate.  We achieve this by proving approximations to
continuous time limits in which the small but fixed learning rate
appears as a parameter; this is known as the method of {\em modified
equations} in the numerical analysis literature, recently rediscovered
as the {\em high resolution ODE} approximation in the machine
learning context. In our second result we show that the
momentum method is approximated by a continuous time gradient flow, 
with an additional momentum-dependent second order time-derivative
correction, proportional to
the learning rate; this may be used to explain the stabilizing effect of
momentum algorithms in their transient phase.
Furthermore in a third result we show that the momentum methods 
admit an exponentially attractive 
invariant manifold  on which the dynamics reduces, approximately,
to a gradient flow with respect to a modified loss function, equal 
to the original loss function plus a small perturbation proportional to the
learning rate; this small correction provides convexification of
the loss function and encodes additional robustness present in
momentum methods, beyond the transient phase.
\end{abstract}

\begin{keywords}
  Optimization, Machine Learning, Deep Learning, Gradient Flows, Momentum Methods,
Modified Equation,
Invariant Manifold 
\end{keywords}

\section{Introduction}
\label{sec:I}

\subsection{Background and Literature Review}
\label{ssec:B}

At the core of many machine learning tasks is solution of the optimization problem
\begin{equation}
\label{eq:min}
\argmin_{u \in \Rd} \Phi(u)
\end{equation}
where \(\Phi: \Rd \to \R\) is an objective (or loss) function that is, in general, non-convex 
and differentiable. Finding global minima of such objective functions is an important
and challenging task with a long history, one in which the use of stochasticity
has played a prominent role for many decades, with papers in the early
development of machine learning \cite{geman1987stochastic,styblinski1990experiments},
together with concomitant theoretical analyses
for both discrete \cite{bertsimas1993simulated}
and continuous problems \cite{kushner1987asymptotic,kushner2012stochastic}.  
Recent successes in the training of deep neural networks have built on this
older work, leveraging the enormous computer power now available, together with
empirical experience about good design choices for the architecture of the networks;
reviews may be found in \cite{deeplearningbook,deeplearningnature}. 
Gradient descent plays a prominent conceptual role in many algorithms, following
from the observation that the equation
\begin{equation}
\label{eq:GD}
\frac{du}{dt}=-\nabla \Phi(u)
\end{equation}
will decrease $\Phi$ along trajectories.
The most widely adopted methods use stochastic gradient decent (SGD), a concept
introduced in \cite{robbinsmonroe}; the basic idea is to use
gradient decent steps based on a noisy approximation to the gradient of \(\Phi\). 
Building on deep work in the convex optimization literature, momentum-based
modifications to stochastic gradient decent have also become widely used in
optimization. Most notable
amongst these momentum-based methods are the Heavy Ball Method (HB), 
due to \cite{polyakhevyball}, and Nesterov's method of accelerated gradients (NAG) \cite{originalnesterov}.
To the best of our knowledge, the first application of HB to neural network training appears in \cite{firsthb}. More 
recent work, such  as \cite{importanceofinitmom}, has even argued for the indispensability of such momentum
based methods for the field of deep learning. 

From these two basic variants on gradient decent, there have come a plethora of 
adaptive methods, incorporating momentum-like ideas, such 
as Adam \cite{adam}, Adagrad \cite{adagrad}, and RMSProp \cite{rmsprop}. There is no
consensus on which method performs best and results vary based on application. The recent work of
\cite{marginaladaptive} argues that the rudimentary, non-adaptive schemes SGD, HB, and NAG result 
in solutions with the greatest generalization performance for supervised learning applications with
deep neural network models.

There is a natural physical analogy for momentum methods, namely that they relate to
a damped second order Hamiltonian dynamic with potential $\Phi$:
\begin{equation}
\label{eq:HAM}
m\frac{d^2u}{dt^2}+\gamma(t)\frac{du}{dt}+\nabla \Phi(u)=0.
\end{equation}
This perspective goes back to Polyak's original work \cite{polyakhevyball,polyakoptimization}
and was further expanded on in \cite{wronghb}, although no proof was given. 
For NAG, the work of \cite{odenesterov} proves that the method approximates a damped Hamiltonian system 
of precisely this form, with a time-dependent damping coefficient.
The analysis in \cite{odenesterov} holds  when 
the momentum factor is chosen according to the rule
\begin{equation}
\label{eq:lambda_time}
\lambda = \lambda_n =  \frac{n}{n + 3},
\end{equation}
where $n$ is the iteration count; this choice was proposed
in the original work of \cite{originalnesterov} and results in
a choice of $\lambda$  
which is asymptotic to $1$.  In the setting where
\(\Phi\) is \(\mu\)-strongly convex, it is proposed in
\cite{nesterovbook} that the momentum factor
is fixed and chosen close to $1$; specifically it is proposed that
\begin{equation}
\label{eq:lambda_mu}
\lambda = \frac{1 - \sqrt{\mu h}}{1+ \sqrt{\mu h}}
\end{equation}
where \(h > 0\) is the time-step (learning rate). 
In \cite{wilsoneq}, a limiting equation for both HB and NAG of the
form 
\[\ddot{u} + 2\sqrt{\mu}\dot{u} + \nabla \Phi(u) = 0\]
is derived under the assumption that \(\lambda\) is fixed with respect
to iteration number $n$, and dependent on the time-step $h$ as specified
in \eqref{eq:lambda_mu};
convergence is obtained to order \(\mathcal{O}(h^{1/2})\).
Using insight from this limiting equation it is possible to
choose the optimal value of $\mu$ to maximize the convergence
rate in the neighborhood of a locally strongly convex objective
function.  Further related work is developed in \cite{shi2018understanding} 
where separate limiting equations for HB and NAG  are derived both in 
the cases of \(\lambda\)
given by \eqref{eq:lambda_time} and \eqref{eq:lambda_mu}, obtaining 
convergence to order \(\mathcal{O}(h^{3/2})\). Much work has 
also gone into analyzing these methods in the discrete setting, without
appeal to the continuous time limits, see 
\cite{hu2017dissipativity,lessard2016analysis}, as well as in the 
stochastic setting, establishing
how the effect on the generalization error, for 
example,  \cite{gadat2018stochastic,loizou2017linearly,yang2016unified}.
In this paper, however, our focus is on the use of continuous time limits
as a methodology to explain optimization algorithms.

In many machine learning applications, especially for deep learning,
NAG and HB are often used with a constant momentum factor \(\lambda\) 
that is chosen independently of the iteration count $n$
(contrary to \eqref{eq:lambda_time}) and independently of the 
learning rate $h$ (contrary to \eqref{eq:lambda_mu}). 
In fact, popular books on the subject such as \cite{deeplearningbook}
introduce the methods in this way, and popular articles, such as \cite{originalresnet} to name one of many, 
simply state the value of the constant momentum factor used in their experiments. 
Widely used deep learning libraries such as Tensorflow \cite{tensorflow} and 
PyTorch \cite{pytorch} implement the methods with a fixed choice 
of momentum factor.
Momentum based methods used in this way,
with fixed momentum, have not been carefully analyzed. 
We will undertake
such an analysis, using ideas from numerical analysis, and in particular
the concept of {\em modified equations} \cite{griffiths1986scope,chartier2007numerical} 
and from the theory of {\em attractive invariant 
manifolds} \cite{hirsch2006invariant,wiggins2013normally};   
both ideas are explained in the text \cite{stuart1998dynamical}. 
It is noteworthy that the {\em  high resolution ODE approximation}
described in \cite{shi2018understanding} may be viewed as
a rediscovery of the method of modified equations.
We emphasize the fact that our work is not at odds 
with any previous analyses of these methods, rather, we consider a
setting which is widely adopted in deep learning applications
and has not been subjected to continuous time analysis to date.

\begin{remark}
Since publication of this article in \cite{kovachki2021continuous},
we became aware of related, and earlier, work by \cite{farazmand2018multiscale}.
Farazmand starts from the Bregman Lagrangian introduced in 
\cite{wibisono2016variational} and uses ideas from geometric singular 
perturbation theory to derive an invariant manifold. The work leads
to a more general description of the invariant manifold than the one 
given by our equation \eqref{eq:approxg2}. Farazmand's work was published
in \cite{farazmand2020multiscale}.
\end{remark}

\subsection{Our Contribution}

We study momentum-based optimization algorithms for the minimization task
\eqref{eq:min}, with learning rate independent momentum, fixed at every
iteration step, focusing on deterministic methods for clarity of exposition.
Our approach is to derive continuous time approximations of the discrete 
algorithms; these continuous time approximations provide insights into the 
mechanisms at play within the discrete algorithms. 
We prove three such approximations. The first shows that the asymptotic
limit of the momentum methods, as learning rate approaches zero,
is simply a rescaled gradient flow \eqref{eq:GD}. The second two
approximations include small perturbations to the rescaled gradient flow,
on the order of the learning rate, and give insight into the behavior
of momentum methods when implemented with momentum and fixed learning
rate. Through these approximation theorems, and accompanying
numerical experiments, we make the following contributions to the
understanding of momentum methods as often implemented within
machine learning:

\begin{itemize}

\item We show that momentum-based methods
with a fixed momentum factor, satisfy, in the continuous-time limit obtained
by sending the learning rate to zero, 
a rescaled version of the gradient flow equation \eqref{eq:GD}. 

\item We show that such methods also approximate a damped
Hamiltonian system of the form \eqref{eq:HAM}, with small mass $m$ (on
the order of the learning rate) and constant damping $\gamma(t)=\gamma$; 
this approximation has the same order of accuracy as the approximation 
of the rescaled equation \eqref{eq:GD} but provides a better 
qualitative understanding of the fixed learning rate momentum
algorithm in its transient phase.

\item We also show that, for the approximate Hamiltonian system, the dynamics 
admit an exponentially attractive invariant manifold, 
locally representable as a graph mapping co-ordinates to their velocities.
The map generating this graph describes 
a gradient flow in a potential which is a small (on the order of the learning rate)
perturbation of $\Phi$ -- see \eqref{eq:mp}; the correction to the potential is convexifying,
does not change the global minimum, and provides insight into the
fixed learning rate momentum algorithm beyond its initial transient
phase.

\item We provide numerical experiments which illustrate the foregoing considerations, for simple linear test problems, 
and for the MNIST digit classification problem; in the latter case we
consider SGD and thereby demonstrate that the conclusions of our theory
have relevance for understanding the stochastic setting as well.

\end{itemize}

Taken together our results are interesting because they demonstrate that
the popular belief that (fixed) momentum methods resemble the dynamics induced
by \eqref{eq:HAM} is misleading. Whilst it is true, the mass in the approximating
equation is small and
as a consequence understanding the dynamics as gradient flows \eqref{eq:GD}, 
with modified potential, is more instructive. 
In fact, in the first application of HB to neural networks 
described in \cite{firsthb}, the authors state that
\say{[their] experience has been that [one] get[s] the same solutions by setting [the momentum factor to zero]
and reducing the size of [the learning rate].} However our theorems should 
not be understood to imply that there is no practical difference between momentum 
methods (with fixed learning rate) and SGD. There is indeed a practical difference 
as has been demonstrated in numerous papers throughout the machine learning 
literature, and our experiments in Section \ref{ssec:DL} further confirm this. 
We show that while these methods have the same transient dynamics, they are approximated differently. Our results demonstrate that, although momentum methods
behave like a gradient descent algorithm, asymptotically, this algorithm
has a modified potential. Furthermore, although this modified potential  \eqref{eq:approxg2}
is on the order of the learning rate, the fact that the learning rate is often
chosen as large as possible, constrained by numerical stability, means that
the correction to the potential may be significant. Our results may be 
interpreted as indicating that the practical success of momentum methods stems 
from the fact that they provide a more stable 
discretization to \eqref{eq:GD} than the forward Euler method employed in SGD. The damped Hamiltonian 
dynamic \eqref{eq:visco}, as well the modified potential, give insight into
how this manifests. Our work gives further theoretical justification for the exploration of 
the use of different numerical integrators for the purposes of optimization such as those performed 
in \cite{scieur2017integration,betancourt2018symplectic,runge-kuttadisc}.

While our analysis is confined to the non-stochastic case to simplify the exposition, 
the results will, with some care, extend to the stochastic setting using ideas from
averaging and homogenization \cite{PavS} as well as continuum analyses of SGD as in \cite{weinansgd,sgdsemigroup};
indeed, in the stochastic setting, sharp uniform in
time error estimates are to be expected for empirical
averages \cite{mat10,bach17}.
To demonstrate that our analysis is 
indeed relevant in the stochastic setting, we train a deep autoencoder with mini-batching (stochastic)
and verify that our convergence results still hold. The details of this experiment 
are given in section \ref{ssec:DL}.
Furthermore we also confine our analysis to fixed learning rate, and impose global
bounds on the relevant derivatives of $\Phi$; this further simplifies the exposition of
the key ideas, but is not essential to them; with considerably more analysis the ideas
exposed in this paper will transfer to adaptive time-stepping methods and much less
restrictive classes of \(\Phi\).

The paper is organized as follows. Section \ref{sec:M}
introduces the optimization procedures and states the 
convergence result to a rescaled gradient flow. In section \ref{sec:ME} we
derive the modified, second-order equation and 
state convergence of the schemes to this equation. 
Section \ref{sec:IM}  asserts the existence of an attractive invariant manifold, 
demonstrating that it results in a gradient flow with respect to a small
perturbation of $\Phi$. In section \ref{ssec:DL}, we train a deep autoencoder,
showing that our results hold in a stochastic setting with Assumption \ref{assump:one} violated.
We conclude in section \ref{sec:C}.
All proofs of theorems are given in the appendices so that the ideas of the
theorems can be presented clearly within the main body of the text.

\subsection{Notation}
\label{ssec:N}

We use $|\cdot|$ to denote the Euclidean norm on $\R^d.$
We define \(f : \Rd \to \Rd\) by \(f(u) \coloneqq - \nabla \Phi(u)\) for any \(u \in \Rd\).
Given parameter $\lambda \in [0,1)$ we define
 \(\bar{\lambda} \coloneqq (1-\lambda)^{-1}\).

For two Banach spaces \(A,B\), and $A_0$ a subset in $A$, we denote by \(C^k(A_0;B)\) the set of \(k\)-times continuously differentiable functions
with domain \(A_0\) and range \(B\). For a function \(u \in C^k(A_0;B)\), we let 
\(D^ju\) denote its \(j\)-th (total) Fr{\'e}chet derivative for \(j=1,\dots,k\). 
For a function \({u \in C^k([0,\infty), \R^d)}\), we denote its 
derivatives by \(\frac{du}{dt}, \frac{d^2 u}{dt^2},\) etc. or equivalently by \(\dot{u}, \ddot{u},\) etc.

To simplify our proofs, we make the following assumption 
about the objective function.

\begin{assumption}
\label{assump:one}
Suppose \(\Phi \in C^3(\Rd;\R)\) with uniformly bounded derivatives. Namely,  
there exist constants \(B_0,B_1,B_2 > 0\) such that
\[\|D^{j-1}f\| = \|D^j \Phi \| \leq B_{j-1}\]
for \(j=1,2,3\) where \(\|\cdot\|\) denotes any appropriate operator norm.
\end{assumption}

We again stress that this assumption is not key to developing the ideas in this work, but is 
rather a simplification used to make our results global. Without Assumption \ref{assump:one},
and no further assumption on \(\Phi\) such as convexity, one could only hope to give local results 
i.e. in the neighborhood of a critical point of \(\Phi\). Such analysis could indeed 
be carried out (see for example \cite{carr2012applications}), but we choose not to do so here for 
the sake of clarity of exposition. In section \ref{ssec:DL}, we give a practical example 
where this assumption is violated and yet the behavior is as predicted
by our theory.

Finally we observe that the nomenclature ``learning rate'' is now prevalent
in machine learning, and so we use it in this paper; it refers to the 
object commonly referred to as ``time-step'' in the field of numerical analysis.

\section{Momentum Methods and Convergence to Gradient Flow}
\label{sec:M}

In subsection \ref{ssec:MR1} we state Theorem \ref{thm:conv_to_gf}
concerning the convergence of a class of momentum methods to a rescaled gradient
flow. Subsection \ref{ssec:Link} demonstrates that the HB and NAG methods
are special cases of our general class of momentum methods, and gives
intuition for proof of Theorem \ref{thm:conv_to_gf}; the proof itself
is given in Appendix A.  Subsection \ref{ssec:N1} contains a 
numerical illustration of  Theorem \ref{thm:conv_to_gf}. 

\subsection{Main Result}
\label{ssec:MR1}

The standard Euler discretization of \eqref{eq:GD} gives the discrete time optimization
scheme
\begin{equation}
\label{eq:gd}
\mathsf{u}_{n+1} = \mathsf{u}_n + h f(\mathsf{u}_n), \quad n=0,1,2,\dots\,.
\end{equation}
Implementation of this scheme requires an initial guess \(\mathsf{u}_0 \in \Rd\). 
For simplicity we consider a fixed learning rate \(h > 0\). 
Equation \eqref{eq:GD} has a unique solution
\(u \in C^3([0,\infty);\Rd)\) under Assumption \ref{assump:one} and
for $u_n=u(nh)$
$$\sup_{0 \le nh \le T} |\mathsf{u}_n-u_n| \le C(T)h;$$
see \cite{stuart1998dynamical}, for example. 

In this section we consider a general class of momentum methods for the minimization
task \eqref{eq:min} which can be written in the form, for some \(a \geq 0\) and
$\lambda \in (0,1)$,
\begin{align}
\label{eq:general_discrete}
\begin{split}
\mathsf{u}_{n+1} &= \mathsf{u}_n + \lambda(\mathsf{u}_n - \mathsf{u}_{n-1}) + hf(\mathsf{u}_n + a(\mathsf{u}_n - \mathsf{u}_{n-1})), \quad n=0,1,2,\dots\,, \\
\mathsf{u}_1 &= \mathsf{u}_0 + hf(\mathsf{u}_0)\,.
\end{split}
\end{align}
Again, implementation of this scheme requires an an initial guess 
\(\mathsf{u}_0 \in \Rd\).
The parameter choice \(a=0\) gives HB and \(a = \lambda\) gives NAG. 
In Appendix A we prove the following: 

\begin{theorem}
\label{thm:conv_to_gf}
Suppose Assumption \ref{assump:one} holds and let \(u \in C^3([0,\infty);\Rd)\) 
be the solution to 
\begin{align}
\label{eq:rgf}
\begin{split}
&\dtfrac{u} = -(1-\lambda)^{-1}\nabla \Phi(u) \\
&u(0) = \mathsf{u}_0
\end{split}
\end{align}
with \(\lambda \in (0,1)\). For \(n=0,1,2,\dots\)
let \(\su_n\) be the sequence given by \eqref{eq:general_discrete} 
and define \(u_n \coloneqq u(nh)\). Then for
any \(T \geq 0\), there is a constant \(C = C(T) > 0\) such that
\[\sup_{0 \leq nh \leq T} |u_n - \su_n| \leq Ch.\]
\end{theorem}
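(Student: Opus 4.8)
The plan is to show that the momentum iteration \eqref{eq:general_discrete} is a (perturbed) one-step discretization of the rescaled gradient flow \eqref{eq:rgf}, and then to invoke a standard convergence-via-consistency-and-stability argument. The obstacle is that \eqref{eq:general_discrete} is a two-step recursion, so I cannot directly compare $\su_n$ with $u(nh)$; instead I would first recast the scheme as a one-step method on the augmented state. A convenient way is to introduce the momentum (or "velocity") variable $\sv_n \coloneqq \su_n - \su_{n-1}$, so that \eqref{eq:general_discrete} becomes a map $(\su_{n+1},\sv_{n+1}) = \Psi_h(\su_n,\sv_n)$ with $\sv_{n+1} = \lambda \sv_n + h f(\su_n + a\sv_n)$ and $\su_{n+1} = \su_n + \sv_{n+1}$; the initialization $\su_1 = \su_0 + hf(\su_0)$ corresponds to $\sv_0 = 0$ in an appropriate sense (or $\sv_1 = hf(\su_0)$).

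Next I would establish an a priori bound: using $\lambda\in(0,1)$ and Assumption~\ref{assump:one} (so $|f(u)| \le B_0 + |f(0)|$ grows at most linearly, and in fact $Df$ is bounded), a discrete Gr\"onwall estimate on the $\sv_n$ recursion shows $|\sv_n| \le Ch$ uniformly for $nh \le T$, i.e. the velocity variable is slaved to size $\Bigo(h)$. This is the quantitative version of the heuristic that, after a brief transient, the increments settle to $\sv_n \approx h\bar\lambda f(\su_n)$. Substituting this back, $\su_{n+1} - \su_n = \sv_{n+1} = \lambda \sv_n + hf(\su_n) + \Bigo(h|\sv_n|) = hf(\su_n) + \lambda\sv_n + \Bigo(h^2)$, and summing the geometric-in-$\lambda$ contribution of the $\lambda \sv_n$ terms (again using $|\sv_n| \le Ch$ and $\sum \lambda^k = \bar\lambda$) yields $\su_{n+1} = \su_n + h\bar\lambda f(\su_n) + \Bigo(h^2)$ — that is, $\su_n$ is a forward-Euler discretization of \eqref{eq:rgf} with a one-step truncation error of size $\Bigo(h^2)$ (plus a fixed-size $\Bigo(h^2)$ correction from the non-autonomous transient at the first few steps, which is harmless).

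Finally I would close the argument by the classical "consistency $+$ stability $\Rightarrow$ convergence" lemma for one-step methods: the exact solution $u\in C^3$ of \eqref{eq:rgf} satisfies $u_{n+1} = u_n + h\bar\lambda f(u_n) + \Bigo(h^3)$ by Taylor expansion (bounded derivatives from Assumption~\ref{assump:one}), so the defect between $u_n$ and the Euler-type update agrees to $\Bigo(h^2)$; the map $u \mapsto u + h\bar\lambda f(u)$ is Lipschitz with constant $1 + h\bar\lambda B_1$, so errors accumulate as $e_{n+1} \le (1 + Ch)e_n + Ch^2$, and discrete Gr\"onwall over $nh\le T$ gives $\sup_{nh\le T}|u_n - \su_n| \le e^{CT}\cdot C h \cdot (T/h)\cdot h \sim C(T)h$ — wait, more precisely the $\Bigo(h^2)$ local errors summed over $\Bigo(1/h)$ steps give $\Bigo(h)$, as claimed. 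The main work, and the step I expect to require the most care, is the a priori bound $|\sv_n|\le Ch$: one must handle the interplay between the geometric damping $\lambda^n$ and the accumulation of the $hf$ forcing terms, and make sure the constants stay uniform on $[0,T]$ using only the global bounds of Assumption~\ref{assump:one}. Everything after that is a routine discrete Gr\"onwall.
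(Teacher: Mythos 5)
Your proposal is correct in outline, but it takes a genuinely different route from the paper. The paper keeps the two-step structure: it Taylor-expands the exact solution of \eqref{eq:rgf} to show it satisfies the recursion \eqref{eq:general_discrete} up to an $\Bigo(h^2)$ defect, forms the concatenated error $E_{n+1}=[e_{n+1},e_n]\in\R^{2d}$, and uses the fact that the companion matrix $A^{(\lambda)}$ has minimal polynomial $(z-1)(z-\lambda)$ — hence is diagonalizable with spectral radius $1$, so there is a norm in which $\|A^{(\lambda)}\|_m=1$ — before applying discrete Gr\"onwall; this is the classical zero-stability argument for linear multistep methods. You instead pass to the augmented one-step map in $(\su_n,\sv_n)$ with $\sv_n=\su_n-\su_{n-1}$, prove the a priori slaving bound $|\sv_n|\le Ch\bar{\lambda}B_0$ (which follows from geometric summation alone, no Gr\"onwall needed), upgrade it to $\sv_n=h\bar{\lambda}f(\su_n)+\Bigo(h\lambda^n)+\Bigo(h^2)$, and then treat $\su_n$ as a perturbed forward-Euler iteration for \eqref{eq:rgf}. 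Both close via Gr\"onwall and both yield $\Bigo(h)$. Your route has the appeal of making visible, already at this stage, the exponential slaving of the velocity to $\bar{\lambda}f(\su_n)$ that the paper only develops later as the leading-order invariant manifold of Section \ref{sec:IM}; the paper's route is more compact and avoids the one place where your write-up is loose, namely the transient: the defect in your one-step form is $\Bigo(h\lambda^n)$, which is order $h$ (not $h^2$) for the first $\Bigo(\log(1/h))$ steps, and is harmless only because its geometric sum over $n$ is $\Bigo(h\bar{\lambda})$ — the same mechanism the paper uses explicitly for the exponentially decaying truncation term in the proof of Theorem \ref{thm:conv_to_visco}. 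A clean way to make your key step rigorous is to set $\mathsf{w}_n\coloneqq\sv_n-h\bar{\lambda}f(\su_n)$ and verify $|\mathsf{w}_{n+1}|\le\lambda|\mathsf{w}_n|+Ch^2$ using $1-\bar{\lambda}=-\lambda\bar{\lambda}$, which gives $|\mathsf{w}_n|\le\lambda^n|\mathsf{w}_0|+Ch^2\bar{\lambda}$ with $|\mathsf{w}_0|=\Bigo(h)$. With that lemma supplied, your argument is complete.
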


Note that \eqref{eq:rgf} is simply a
sped-up version of \eqref{eq:GD}: if \(v\) solves \eqref{eq:GD} and \(w\) solves 
\eqref{eq:rgf} then \(v(t) = w((1-\lambda)t)\) for any \(t \in [0,\infty)\).
This demonstrates that introduction of momentum in the form used within
both HB and NAG results in numerical methods that do not differ substantially
from gradient descent.

\subsection{Link to HB and NAG}
\label{ssec:Link}

The HB method is usually written as a 
two-step scheme taking the form (\cite{importanceofinitmom})
\begin{align*}
\mathsf{v}_{n+1} &= \lambda \mathsf{v}_n + h f(\mathsf{u}_n) \\
\mathsf{u}_{n+1} &= \mathsf{u}_n + \mathsf{v}_{n+1}
\end{align*}
with \(\mathsf{v}_0 = 0\), \(\lambda \in (0,1)\) the momentum factor, and \(h > 0\) the learning rate. We can re-write this update 
as 
\begin{align*}
\mathsf{u}_{n+1} &= \mathsf{u}_n + \lambda \mathsf{v}_n + hf(\mathsf{u}_n) \\
&= \mathsf{u}_n + \lambda (\mathsf{u}_n - \mathsf{u}_{n-1}) + h f(\mathsf{u}_n)
\end{align*}
hence the method reads 
\begin{align}
\label{eq:hb_discrete}
\begin{split}
\mathsf{u}_{n+1} &= \mathsf{u}_n + \lambda (\mathsf{u}_n - \mathsf{u}_{n-1}) + h f(\mathsf{u}_n) \\
\mathsf{u}_1 &= \mathsf{u}_0 + hf(\mathsf{u}_0). 
\end{split}
\end{align}

Similarly NAG is usually written as (\cite{importanceofinitmom})
\begin{align*}
\mathsf{v}_{n+1} &= \lambda \mathsf{v}_n + h f(\mathsf{u}_n + \lambda \mathsf{v}_n) \\
\mathsf{u}_{n+1} &= \mathsf{u}_n + \mathsf{v}_{n+1}
\end{align*}
with \(\mathsf{v}_0 = 0\). Define \(\mathsf{w}_n \coloneqq \mathsf{u}_n + \lambda \mathsf{v}_n\) then 
\begin{align*}
\mathsf{w}_{n+1} &= \mathsf{u}_{n+1} + \lambda \mathsf{v}_{n+1} \\
&= \mathsf{u}_{n+1} + \lambda (\mathsf{u}_{n+1} - \mathsf{u}_n)
\end{align*}
and
\begin{align*}
\mathsf{u}_{n+1} &= \mathsf{u}_n + \lambda \mathsf{v}_n + hf(\mathsf{u}_n + \lambda \mathsf{v}_n) \\
&= \mathsf{u}_n + (\mathsf{w}_n - \mathsf{u}_n) + h f(\mathsf{w}_n) \\
&= \mathsf{w}_n + h f(\mathsf{w}_n).
\end{align*}
Hence the method may be written as  
\begin{align}
\label{eq:nag_discrete}
\begin{split}
\mathsf{u}_{n+1} &= \mathsf{u}_n + \lambda(\mathsf{u}_n - \mathsf{u}_{n-1}) + hf(\mathsf{u}_n + \lambda(\mathsf{u}_n - \mathsf{u}_{n-1})) \\
\mathsf{u}_1 &= \mathsf{u}_0 + hf(\mathsf{u}_0). 
\end{split}
\end{align}

It is clear that \eqref{eq:hb_discrete} and \eqref{eq:nag_discrete} are special cases 
of \eqref{eq:general_discrete}
with \(a=0\) giving HB and \(a = \lambda\) giving NAG. 
To intuitively understand Theorem \ref{thm:conv_to_gf}, re-write \eqref{eq:rgf} as 
\[\frac{du}{dt} - \lambda \frac{du}{dt} = f(u).\]
If we discretize the \(du/dt\) term using forward differences and the \(-\lambda du/dt\) term using backward differences, 
we obtain
\[\frac{u(t+h) - u(t)}{h} - \lambda \frac{u(t) - u(t-h)}{h} \approx f(u(t)) \approx f \left( u(t) + ha \frac{u(t) - u(t-h)}{h} \right )\]
with the second approximate equality coming from the Taylor expansion of \(f\). This can be rearranged as
\[u(t+h) \approx u(t) + \lambda (u(t) - u(t-h)) + h f(u(t) + a(u(t)-u(t-h)))\]
which has the form of \eqref{eq:general_discrete} with the identification \(\mathsf{u}_n \approx u(nh)\).

\begin{figure}[t]
    \centering
    \begin{subfigure}[b]{0.31\textwidth}
        \includegraphics[width=\textwidth]{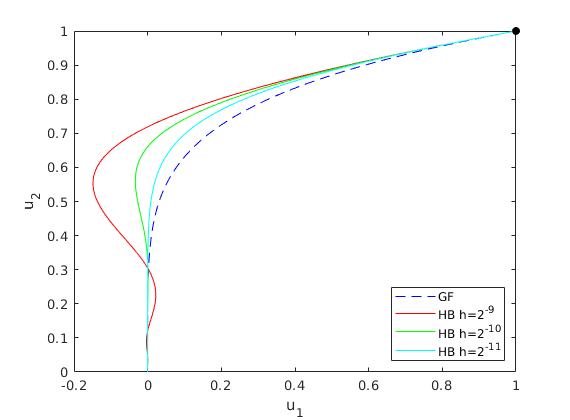}
        \caption{HB: \(\kappa = 5\)}
    \end{subfigure}
    ~ 
    \begin{subfigure}[b]{0.31\textwidth}
        \includegraphics[width=\textwidth]{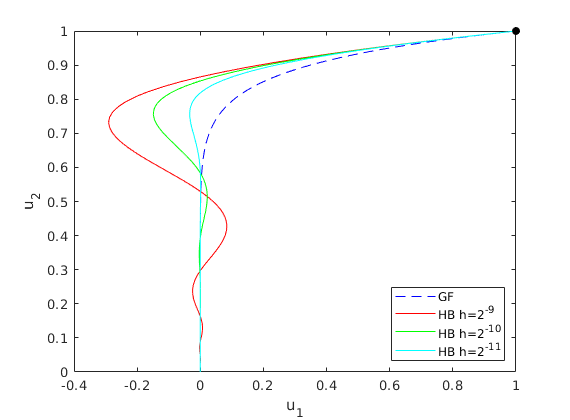}
        \caption{HB: \(\kappa = 10\)}
    \end{subfigure}
     ~ 
    \begin{subfigure}[b]{0.31\textwidth}
        \includegraphics[width=\textwidth]{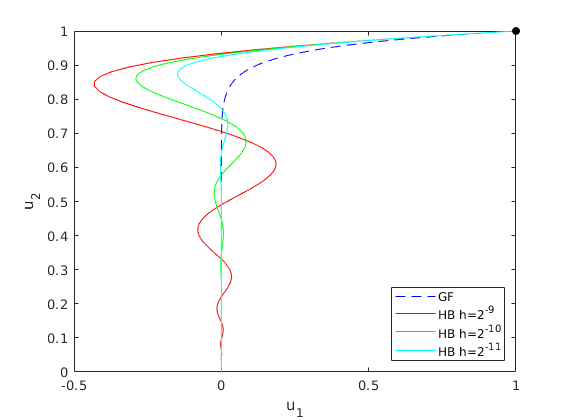}
        \caption{HB: \(\kappa = 20\)}
    \end{subfigure}

    \begin{subfigure}[b]{0.31\textwidth}
        \includegraphics[width=\textwidth]{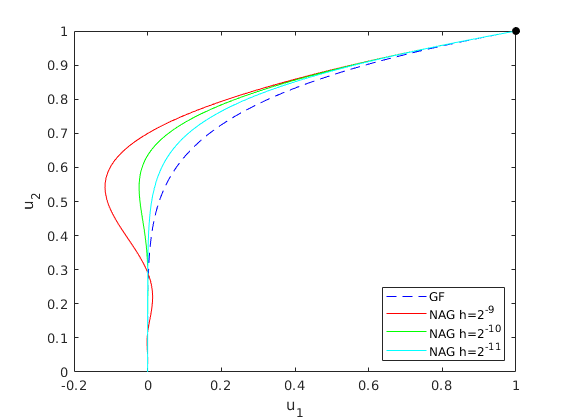}
        \caption{NAG: \(\kappa = 5\)}
    \end{subfigure}
    ~ 
    \begin{subfigure}[b]{0.31\textwidth}
        \includegraphics[width=\textwidth]{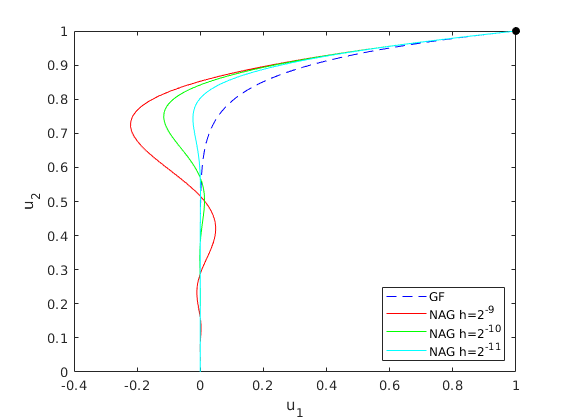}
        \caption{NAG: \(\kappa = 10\)}
    \end{subfigure}
     ~ 
    \begin{subfigure}[b]{0.31\textwidth}
        \includegraphics[width=\textwidth]{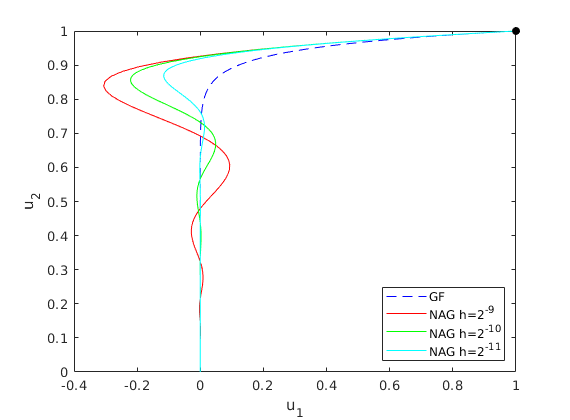}
        \caption{NAG: \(\kappa = 20\)}
    \end{subfigure}

    \caption{Comparison of trajectories for HB and NAG with the gradient flow \eqref{eq:rgf} on the two-dimensional problem
    \(\Phi(u) = \frac{1}{2} \langle u, Q u \rangle\) with \(\lambda=0.9\) fixed. We vary the condition 
    number of \(Q\) as well as the learning rate \(h\).} 
    \label{fig:gf_traj}
\end{figure}

\begin{figure}[t]
    \centering
    \begin{subfigure}[b]{0.31\textwidth}
        \includegraphics[width=\textwidth]{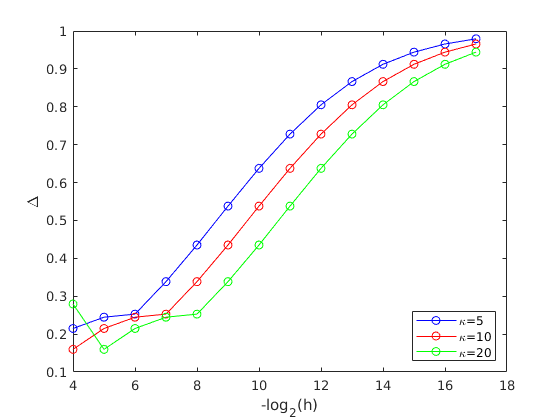}
        \caption{HB}
    \end{subfigure}
    ~ 
    \begin{subfigure}[b]{0.31\textwidth}
        \includegraphics[width=\textwidth]{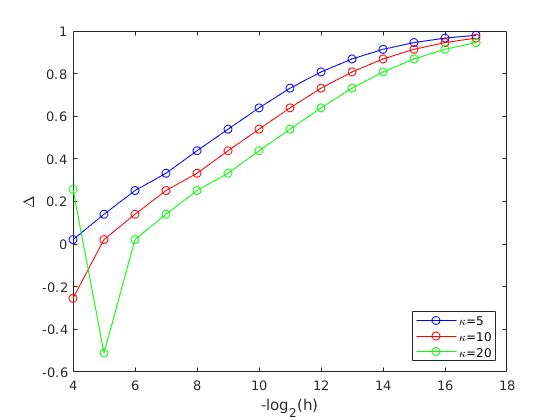}
        \caption{NAG}
    \end{subfigure}

    \caption{The numerical rate of convergence, as a function of the learning rate \(h\), of HB and NAG to the gradient flow \eqref{eq:rgf}
    for the problem described in Figure \ref{fig:gf_traj}.}
    \label{fig:gf_conv}
\end{figure}

\subsection{Numerical Illustration}
\label{ssec:N1}

Figure \ref{fig:gf_traj} compares trajectories of the momentum numerical method
\eqref{eq:general_discrete} with the rescaled gradient flow \eqref{eq:rgf}, for
the two-dimensional problem \(\Phi(u) = \frac{1}{2} \langle u, Q u \rangle\).
We pick \(Q\) to be positive-definite so that the minimum is achieved at the point \((0,0)^T\)
and make it diagonal so that we can easily control its condition number.
In particular, the condition number of \(Q\) is given as
\[\kappa = \frac{\max \{Q_{11}, Q_{22}\} }{\min \{Q_{11}, Q_{22}\} }.\]
We see that, as the condition number is increased, both HB and NAG exhibit 
more pronounced transient oscillations and are thus further away from 
the trajectory of \eqref{eq:rgf}, however, as the learning rate \(h\) is 
decreased, the oscillations dampen and the trajectories match more and more 
closely. This observation from Figure \ref{fig:gf_traj} is quantified in 
Figure \ref{fig:gf_conv} where we estimate the rate of convergence, as a function
of \(h\), which is defined as 
\[\Delta = \log_2 \frac{\|\su^{(h)} - u\|_\infty}{\|\su^{(h/2)} - u\|_\infty}\] 
where \(\su^{(\alpha)}\) is the numerical solution using time-step \(\alpha\). The figure shows
that the rate of convergence is indeed close to $1$, as predicted by our theory.
In summary the behavior of the momentum methods is precisely that of a rescaled
gradient flow, but with initial transient oscillations which capture momentum
effects, but disappear as the learning rate is decreased.
We model these oscillations in the next section
via use of a modified equation.

\section{Modified Equations}
\label{sec:ME}

The previous section demonstrates how the momentum methods approximate a
time rescaled version of the gradient flow \eqref{eq:GD}. In this section
we show how the same methods may also be viewed as approximations of
the damped Hamiltonian system \eqref{eq:HAM}, with mass $m$ on the
order of the learning rate, using the method of modified equations.
In subsection \ref{ssec:MR2} we state and discuss the main result of the section,
Theorem \ref{thm:conv_to_visco}. 
Subsection \ref{ssec:Link2} gives
intuition for proof of Theorem \ref{thm:conv_to_visco}; the proof itself
is given in Appendix B. And the section also contains
comments on generalizing the idea of modified equations. 
In subsection \ref{ssec:N2} we describe a numerical
illustration of Theorem \ref{thm:conv_to_visco}. 

\subsection{Main Result}
\label{ssec:MR2}

The main result of this section quantifies the sense in which momentum methods
do, in fact, approximate a damped Hamiltonian system; it is proved in Appendix B.

\begin{theorem}
\label{thm:conv_to_visco}
Fix $\lambda \in (0,1)$ and assume that $a \ge 0$ is chosen so that
$\alpha \coloneqq \frac12(1 + \lambda - 2a(1-\lambda))$ is strictly positive.
Suppose Assumption \ref{assump:one} holds and let \(u \in C^4([0,\infty);\Rd)\) be the solution to
\begin{align}
\label{eq:visco}
\begin{split}
& h \alpha \frac{d^2u}{dt^2} + (1-\lambda)\frac{du}{dt} = -\nabla \Phi(u) \\
&u(0) = \su_0, \quad 
\frac{du}{dt}(0) = \su_0'.
\end{split}
\end{align}
Suppose further that \(h \leq (1-\lambda)^2/2 \alpha B_1\).
For \(n=0,1,2,\dots\) let \(\su_n\) be the sequence given by \eqref{eq:general_discrete} and define \(u_n \coloneqq u(nh)\). Then for
any \(T \geq 0\), there is a constant \(C = C(T) > 0\) such that
\[\sup_{0 \leq nh \leq T} |u_n - \su_n| \leq Ch.\]
\end{theorem}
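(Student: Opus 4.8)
The plan is to treat the discrete scheme \eqref{eq:general_discrete} as a consistent one-step-after-reduction discretization of the modified equation \eqref{eq:visco} and then invoke a standard discrete Gr\"onwall argument, exactly paralleling the proof of Theorem \ref{thm:conv_to_gf} but carrying one extra order in the Taylor expansion. First I would rewrite the three-term recursion as a first-order system in the variable $(\su_n, \sv_n)$ with $\sv_n \coloneqq \su_n - \su_{n-1}$, so that \eqref{eq:general_discrete} becomes $\su_{n+1} = \su_n + \lambda \sv_n + h f(\su_n + a \sv_n)$, $\sv_{n+1} = \lambda \sv_n + h f(\su_n + a\sv_n)$. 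On the continuous side, introduce $v \coloneqq h\dot u$ (the natural scaling, since $\sv_n \approx \su_n - \su_{n-1} \approx h\dot u(nh)$ is itself $O(h)$) and check that \eqref{eq:visco} is equivalent to the system $\dot u = v/h$, $\dot v = h\dot u \cdot(\text{stuff})$, i.e. $h\alpha \ddot u = -(1-\lambda)\dot u - \nabla\Phi(u)$; under Assumption \ref{assump:one} and $h\le(1-\lambda)^2/2\alpha B_1$ this ODE has a unique $C^4$ solution on $[0,\infty)$ (standard existence theory plus the a priori bound from the dissipative structure of the equation, which is where the smallness-of-$h$ hypothesis is used to control the second-order term).

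The core step is the local truncation error estimate. Writing $u_n = u(nh)$ and $v_n = h\dot u(nh)$, I would Taylor-expand $u((n+1)h)$ and $u((n-1)h)$ about $t=nh$ to fourth order and substitute into $u_{n+1} - u_n - \lambda(u_n - u_{n-1}) - hf(u_n + a(u_n - u_{n-1}))$. The $O(1)$ and $O(h)$ terms cancel by \eqref{eq:rgf}-consistency; the $O(h^2)$ terms are arranged to cancel precisely because of the definition $\alpha = \frac12(1+\lambda-2a(1-\lambda))$ together with \eqref{eq:visco}; note the Taylor expansion of $f(u_n + a(u_n-u_{n-1})) = f(u_n) + a(u_n - u_{n-1})Df(u_n) + O(h^2)$ contributes an $O(h^2)$ cross-term $ha\,(u_n-u_{n-1})\,Df(u_n)$ that must be booked against $h^2\ddot u$ and the $\lambda$-weighted backward term, and this is exactly the calculation that fixes the coefficient $\alpha$. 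What remains is an $O(h^3)$ residual, uniformly bounded on $[0,T]$ by $C(T)h^3$ using the $C^4$ bound on $u$ (which in turn follows from Assumption \ref{assump:one} by differentiating \eqref{eq:visco}). The special starting step $\su_1 = \su_0 + hf(\su_0)$ versus $u_1 = u(h)$ contributes an $O(h^2)$ error in $u$ but only an $O(h)$ error in the $v$-variable, consistent with the claimed rate.

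Finally I would set $e_n \coloneqq u_n - \su_n$ and $\epsilon_n \coloneqq v_n - \sv_n$, subtract the exact-solution identity from the scheme, and use the Lipschitz bound $|f(x)-f(y)|\le B_0|x-y|$ (Assumption \ref{assump:one}) to obtain $|e_{n+1}| \le |e_n| + \lambda|\epsilon_n| + hB_0(|e_n| + a|\epsilon_n|) + C h^3$ and a matching inequality for $|\epsilon_{n+1}|$, i.e. $\|(e_{n+1},\epsilon_{n+1})\| \le (1+Ch)\|(e_n,\epsilon_n)\| + Ch^3$ for an appropriate norm on $\R^{2d}$ — here one checks that the linear part of the map $(e,\epsilon)\mapsto(e+\lambda\epsilon, \lambda\epsilon)+hDf\cdot(\ldots)$ has a suitable bound because $\lambda\in(0,1)$. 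Discrete Gr\"onwall over $nh\le T$ then gives $\|(e_n,\epsilon_n)\| \le e^{CT}(\|(e_0,\epsilon_0)\| + Ch^2) \le C(T) h$, since $e_0 = 0$ and $\epsilon_0 = O(h)$. The main obstacle is the bookkeeping in the truncation-error step: getting the $O(h^2)$ terms to cancel requires carefully tracking the interaction between the second-difference expansion of the left-hand side and the first-order Taylor expansion of $f$ at the shifted argument $\su_n + a(\su_n-\su_{n-1})$, and it is precisely this cancellation that both motivates and pins down the constant $\alpha$; everything downstream is a routine Gr\"onwall estimate of the same flavor as in Appendix A.
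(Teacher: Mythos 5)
Your overall architecture (reduce to a first-order system, estimate the local truncation error, close with a discrete Gr\"onwall argument) matches the paper's, and your identification of how the coefficient $\alpha=\frac12(1+\lambda-2a(1-\lambda))$ arises from cancelling the $\mathcal{O}(h^2)$ terms is correct. But there is a genuine gap at the heart of the truncation-error step: you assert that the residual is ``$\mathcal{O}(h^3)$, uniformly bounded on $[0,T]$ by $C(T)h^3$ using the $C^4$ bound on $u$ (which in turn follows from Assumption \ref{assump:one} by differentiating \eqref{eq:visco}).'' No such $h$-uniform bound exists. Equation \eqref{eq:visco} is singularly perturbed: already at $t=0$ one has $\ddot u(0)=\frac{1}{h\alpha}\bigl(f(\su_0)-(1-\lambda)\su_0'\bigr)$, which blows up as $h\to 0$ for the arbitrary initial velocity $\su_0'$ permitted by the theorem. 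The paper's Lemma \ref{lemma:visco_bound} quantifies this precisely: $|\ddot u(t)|\lesssim h^{-1}e^{-(1-\lambda)t/2h\alpha}+1$ and $|\dddot u(t)|\lesssim h^{-2}e^{-(1-\lambda)t/2h\alpha}+1$, and the hypothesis $h\le(1-\lambda)^2/2\alpha B_1$ is used exactly there to make the dissipation dominate. Consequently the residual you call $\mathcal{O}(h^3)$ is in fact of size $\mathcal{O}\bigl(h\,e^{-(1-\lambda)n/2\alpha}\bigr)+\mathcal{O}(h^2)$ per step: large (order $h$) inside the initial layer and only $\mathcal{O}(h^2)$ outside it. The paper's proof must therefore sum a geometric-type series of exponentially decaying truncation errors, which contributes $\mathcal{O}(h)$ to the global error independently of the starting-step mismatch; your account attributes the final $\mathcal{O}(h)$ rate solely to the $\mathcal{O}(h)$ error in the velocity variable at $n=0$, which misattributes the dominant error source and would (incorrectly) suggest the rate could be improved to $\mathcal{O}(h^2)$ by a clever choice of $\su_0'$. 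The paper explicitly notes that the rate cannot be so improved and that this is why $\su_0'$ is left arbitrary.

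Two smaller points: the Lipschitz constant of $f$ under Assumption \ref{assump:one} is $B_1$ (the bound on $Df$), not $B_0$; and the existence of a global $C^4$ solution is not in question, only the $h$-dependence of its derivative bounds. To repair the proof you would need to (i) prove the analogue of Lemma \ref{lemma:visco_bound} by the energy/Gr\"onwall argument on $w=(1-\lambda)\dot u-f(u)$, and (ii) carry the resulting $h^{-1}e^{-(1-\lambda)t/2h\alpha}$ weights through the truncation-error bookkeeping and the final summation. Everything else in your outline then goes through.
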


Theorem \ref{thm:conv_to_gf} demonstrates the same order of convergence, namely
${\mathcal O}(h)$, to the rescaled gradient flow equation \eqref{eq:rgf},
obtained from \eqref{eq:visco} simply by setting $h=0.$ In the standard method
of modified equations the limit system (here \eqref{eq:rgf})
is perturbed by small terms (in terms of the assumed small learning rate) 
and an increased  rate of convergence is obtained to the modified equation 
(here  \eqref{eq:visco}). In our setting however, because the small modification is to
a higher derivative (here second) than appears in the limit equation (here first
order), an increased 
rate of convergence is not obtained.  This is due to the nature of the modified 
equation, whose solution has derivatives that are inversely proportional to 
powers of \(h\); this fact is quantified in Lemma \ref{lemma:visco_bound} 
from Appendix B. It is precisely because the modified equation does not lead to
a higher rate of convergence that the initial parameter $\su_0'$ is arbitrary; the same
rate of convergence is obtained no matter what value it takes.

It is natural to ask, therefore, what
is learned from the convergence result in Theorem \ref{thm:conv_to_visco}. The
answer is that, although the modified equation \eqref{eq:visco} is approximated
at the same order as the limit equation \eqref{eq:rgf}, it actually contains
considerably more qualitative information about the dynamics of the system,
particularly in the early transient phase of the algorithm; this will be
illustrated in subsection \ref{ssec:N2}.
Indeed we will make a specific choice of $\su_0'$ in our numerical experiments,
namely
\begin{equation}
\label{eq:beta}
\frac{du}{dt}(0) = \frac{1 - 2\alpha}{2\alpha - \lambda + 1} f(\su_0),
\end{equation}
to better match the transient dynamics.

\subsection{Intuition and Wider Context}
\label{ssec:Link2}

\subsubsection{Idea Behind The Modified Equations}

In this subsection, we show that the scheme \eqref{eq:general_discrete} exhibits momentum, in the sense 
of approximating a momentum equation, but the size of the momentum term is on the order of the step size \(h\). 
To see this intuitively, we add and subtract \(\su_n - \su_{n-1}\) to the right hand size of \eqref{eq:general_discrete} then 
we can rearrange it to obtain
\[h \frac{\su_{n+1} - 2\su_n + \su_{n-1}}{h^2} + (1-\lambda) \frac{\su_n - \su_{n-1}}{h} = f(\su_n + a(\su_n - \su_{n-1})).\]
This can be seen as a second order central difference and first order backward difference discretization of the 
momentum equation
\[h \frac{d^2u}{dt^2} + (1-\lambda)\frac{du}{dt} = f(u)\]
noting that the second derivative term has size of order \(h\).

\subsubsection{Higher Order Modified Equations For HB}

We will now show that, for HB, we may derive higher order modified equations that are 
consistent with \eqref{eq:hb_discrete}. Taking the limit of these equations yields an 
operator that agrees with with our intuition for discretizing \eqref{eq:rgf}. To this end,
suppose \(\Phi \in C^\infty_b (\R^d, \R)\) and consider the ODE(s),
\begin{equation}
\label{eq:hb_highorder}
\sum_{k=1}^p \frac{h^{k-1}(1 + (-1)^k \lambda)}{k!} \frac{d^k u}{dt^k} = f(u)
\end{equation}
noting that \(p = 1\) gives \eqref{eq:rgf} and \(p=2\) gives \eqref{eq:visco}. Let \(u \in C^\infty ([0,\infty),\R^d)\)
be the solution to \eqref{eq:hb_highorder} and define \(u_n \coloneqq u(nh)\), \(u_n^{(k)} \coloneqq \frac{d^k u}{dt^k} (nh)\)
for \(n=0,1,2,\dots\) and \(k=1,2,\dots,p\). Taylor expanding yields
\[u_{n \pm 1} = u_n + \sum_{k=1}^p \frac{(\pm 1)^k h^k}{k!} u^{(k)}_n + h^{p+1} I^{\pm}_n\]
where
\[I^{\pm}_n = \frac{(\pm 1)^{p+1}}{p!} \int_0^1 (1-s)^p \frac{d^{p+1}u}{dt^{p+1}} ((n \pm s)h) ds.\]
Then 
\begin{align*}
u_{n+1} - u_n - \lambda(u_n - u_{n-1}) &= \sum_{k=1}^p \frac{h^k}{k!} u^{(k)}_n + \lambda \sum_{k=1}^p \frac{ (-1)^k h^k}{k!} u^{(k)}_n + h^{p+1}(I^+_n - \lambda I^-_n) \\
&= h \sum_{k=1}^p \frac{h^{k-1}(1 + (-1)^k \lambda)}{k!} u^{(k)}_n + h^{p+1}(I^+_n - \lambda I^-_n) \\
&= h f(u_n) + h^{p+1}(I^+_n - \lambda I^-_n)
\end{align*}
showing consistency to order \(p+1\). As is the case with \eqref{eq:visco} however, the \(I^\pm_n\) terms will 
be inversely proportional to powers of \(h\) hence global accuracy will not improve.

We now study the differential operator on the l.h.s. of \eqref{eq:hb_highorder} as \(p \rightarrow \infty\).
Define the sequence of differential operators \(T_p : C^\infty([0,\infty),\R^d) \to C^\infty([0,\infty),\R^d)\) by
\[T_p u = \sum_{k=1}^p \frac{h^{k-1}(1 + (-1)^k \lambda)}{k!} \frac{d^k u}{dt^k}, \quad \forall u \in C^\infty([0,\infty),\R^d).\]
Taking the Fourier transform yields
\[\mathcal{F}(T_p u)(\omega) = \sum_{k=1}^p \frac{h^{k-1}(1 + (-1)^k \lambda)(i \omega)^k}{k!} \mathcal{F}(u)(\omega)\]
where \(i = \sqrt{-1}\) denotes the imaginary unit. Suppose there is a limiting operator \(T_p \to T\) as \(p \to \infty\)
then taking the limit yields
\[\mathcal{F}(Tu)(\omega) = \frac{1}{h} (e^{ih\omega} + \lambda e^{-ih\omega} - \lambda -1) \mathcal{F}(u)(\omega).\]
Taking the inverse transform and using the convolution theorem, we obtain
\begin{align*}
(Tu)(t) &= \frac{1}{h} \mathcal{F}^{-1} ( e^{ i h \omega} + \lambda e^{- i h \omega} - \lambda - 1  )(t) * u(t) \\
&= \frac{1}{h} \left ( -(1+\lambda) \delta(t) + \lambda \delta(t + h) + \delta(t - h) \right) * u(t) \\
&= \frac{1}{h} \int_{-\infty}^\infty \left ( -(1+\lambda) \delta(t - \tau) + \lambda \delta(t - \tau + h) + \delta(t - \tau - h) \right)u(\tau) \: d\tau \\
&= \frac{1}{h} \left ( -(1+\lambda)u(t) + \lambda u(t -  h) + u(t +  h) \right ) \\
&= \frac{u(t+h) - u(t)}{h} - \lambda \left ( \frac{u(t) - u(t-h)}{h} \right )
\end{align*}
where \(\delta(\cdot)\) denotes the Dirac-delta distribution and we abuse notation by writing its action 
as an integral. The above calculation does not prove convergence of \(T_p\) to \(T\), but simply confirms 
our intuition that \eqref{eq:hb_discrete} is a forward and backward discretization of \eqref{eq:rgf}.

\begin{figure}[t]
    \centering
    \begin{subfigure}[b]{0.31\textwidth}
        \includegraphics[width=\textwidth]{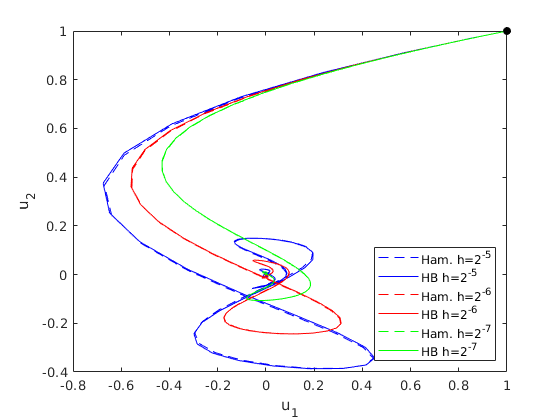}
        \caption{HB: \(\kappa = 5\)}
    \end{subfigure}
    ~ 
    \begin{subfigure}[b]{0.31\textwidth}
        \includegraphics[width=\textwidth]{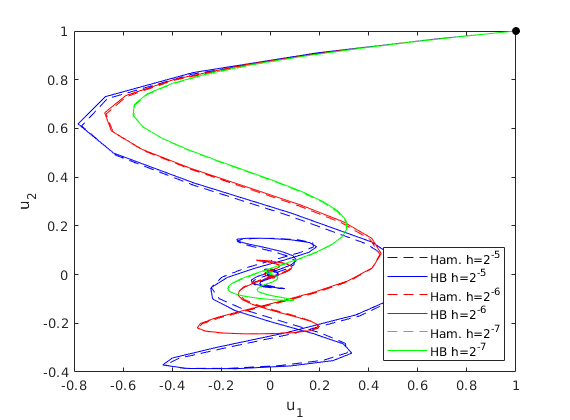}
        \caption{HB: \(\kappa = 10\)}
    \end{subfigure}
     ~ 
    \begin{subfigure}[b]{0.31\textwidth}
        \includegraphics[width=\textwidth]{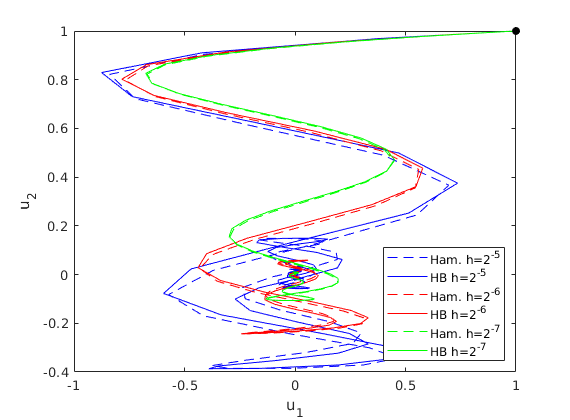}
        \caption{HB: \(\kappa = 20\)}
    \end{subfigure}

    \begin{subfigure}[b]{0.31\textwidth}
        \includegraphics[width=\textwidth]{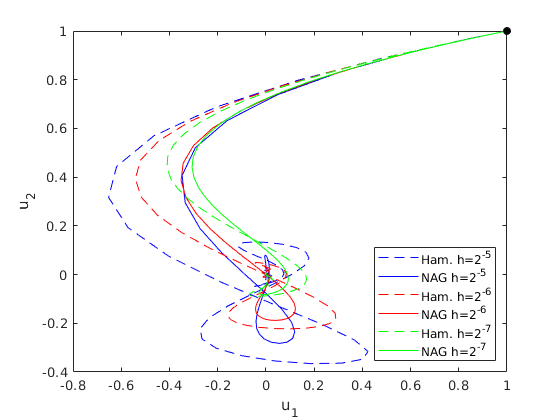}
        \caption{NAG: \(\kappa = 5\)}
    \end{subfigure}
    ~ 
    \begin{subfigure}[b]{0.31\textwidth}
        \includegraphics[width=\textwidth]{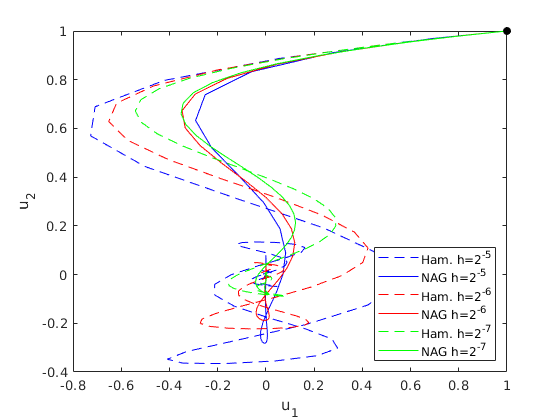}
        \caption{NAG: \(\kappa = 10\)}
    \end{subfigure}
     ~ 
    \begin{subfigure}[b]{0.31\textwidth}
        \includegraphics[width=\textwidth]{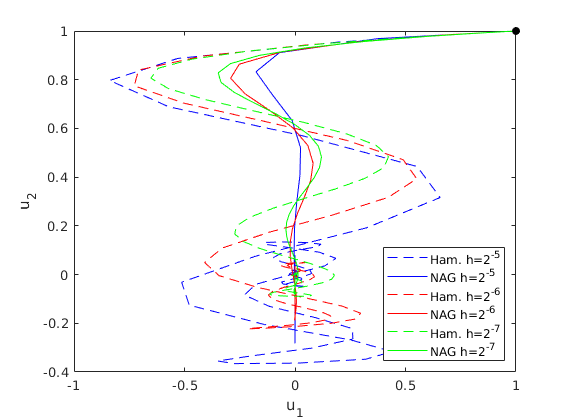}
        \caption{NAG: \(\kappa = 20\)}
    \end{subfigure}

    \caption{Comparison of trajectories for HB and NAG with the Hamiltonian dynamic \eqref{eq:visco} on the two-dimensional problem
    \(\Phi(u) = \frac{1}{2} \langle u, Q u \rangle\) with \(\lambda=0.9\) fixed. We vary the condition 
    number of \(Q\) as well as the learning rate \(h\).} 
    \label{fig:visco_traj}
\end{figure}

\begin{figure}[t]
    \centering
    \begin{subfigure}[b]{0.31\textwidth}
        \includegraphics[width=\textwidth]{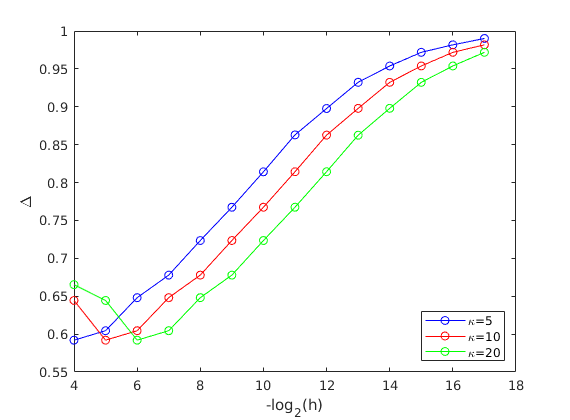}
        \caption{HB}
    \end{subfigure}
    ~ 
    \begin{subfigure}[b]{0.31\textwidth}
        \includegraphics[width=\textwidth]{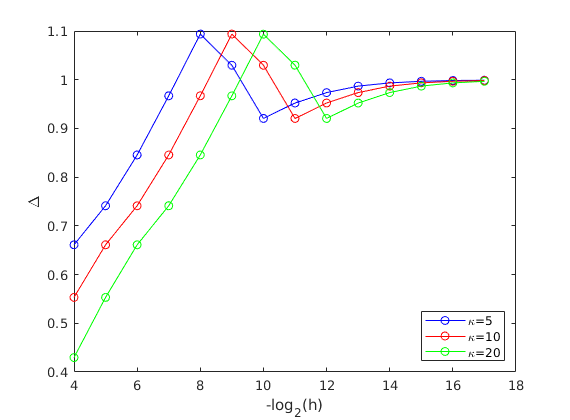}
        \caption{NAG}
    \end{subfigure}

    \caption{The numerical rate of convergence, as a function of the learning rate \(h\), of HB and NAG to the momentum equation \eqref{eq:visco}
    for the problem described in Figure \ref{fig:visco_traj}.} 
    \label{fig:visco_conv}
\end{figure}

\subsection{Numerical Illustration}
\label{ssec:N2}

Figure \ref{fig:visco_traj} shows trajectories of \eqref{eq:general_discrete} 
and \eqref{eq:visco} for different values of \(a\) and 
\(h\) on the two-dimensional problem \(\Phi(u) = \frac{1}{2} \langle u , Q u \rangle\),
varying the condition number of \(Q\). 
We make the specific choice of $\su_0'$ implied by the initial 
condition \eqref{eq:beta}. Figure \ref{fig:visco_conv} shows 
the numerical order of convergence as a function of \(h\), as defined in Section \ref{ssec:N1}, which is near 1, matching our theory. We note that the 
oscillations in HB are captured well by \eqref{eq:visco}, except for a slight shift when \(h\) and \(\kappa\) are large. This is due to 
our choice of initial condition which cancels the maximum number of terms in the Taylor expansion initially, but the overall rate of convergence remains
\(\mathcal{O}(h)\) due to Lemma \ref{lemma:visco_bound}. Other choices of $\su_0'$
also result in \(\mathcal{O}(h)\) convergence and can be picked on a case-by-case basis 
to obtain consistency with different qualitative phenomena of interest in the dynamics.
Note also that \(\alpha|_{a = \lambda} < \alpha|_{a = 0}\). As a result the transient
oscillations in \eqref{eq:visco} are more quickly damped in the NAG case than in 
the HB case; this is consistent with the numerical results. 
However panels (d)-(f) in Figure \ref{fig:gf_traj} show that \eqref{eq:visco} is not able to 
adequately capture the oscillations of NAG when \(h\) is relatively large. We 
leave for future work, the task of finding equations that are able to appropriately capture the 
oscillations of NAG in the large \(h\) regime.

\section{Invariant Manifold}
\label{sec:IM}

The key lessons of the previous two sections are that the momentum methods
approximate a rescaled gradient flow of the form \eqref{eq:GD} and a damped
Hamiltonian system of the form \eqref{eq:HAM}, with small mass $m$ which
scales with the learning rate, and constant damping $\gamma.$ 
Both approximations hold with the same
order of accuracy, in terms of the learning rate, and numerics demonstrate that the Hamiltonian system is
particularly useful in providing intuition for the transient regime of the
algorithm. In this section we link the two theorems from the two preceding
sections by showing that the Hamiltonian dynamics with small mass from section
\ref{sec:ME} has
an exponentially attractive invariant manifold on which the dynamics
is, to leading order, a gradient flow.
That gradient flow is a small, in terms of the learning rate,
perturbation of the time-rescaled gradient flow
from section \ref{sec:M}.

\subsection{Main Result}
\label{ssec:MR3}

Define 
\begin{equation}
\label{eq:define}
\sv_n \coloneqq (\su_n - \su_{n-1})/h
\end{equation}
noting that then \eqref{eq:general_discrete} becomes
\[\su_{n+1} = \su_n + h \lambda \sv_n + hf(\su_n + ha\sv_n)\]
and 
\[\sv_{n+1} = \frac{\su_{n+1} - \su_n}{h} = \lambda \sv_n + f(\su_n + ha\sv_n).\]
Hence we can re-write \eqref{eq:general_discrete} as 
\begin{align}
\label{eq:two_step_method}
\begin{split}
\su_{n+1} &= \su_n + h \lambda \sv_n + hf(\su_n + ha\sv_n) \\
\sv_{n+1} &= \lambda \sv_n + f(\su_n + ha \sv_n).
\end{split}
\end{align}

Note that if $h=0$ then \eqref{eq:two_step_method} shows that 
$\su_n=\su_0$ is constant in $n$, and that 
$\sv_n$ converges to $(1-\lambda)^{-1}f(\su_0).$ This suggests that,
for $h$ small, there is an invariant manifold which is a small perturbation of
the relation $\sv_n=\bar{\lambda}f(\su_n)$ 
and is representable as a graph. Motivated by this, we 
look for a function \(g: \Rd \to \Rd\) such that
the manifold
\begin{equation}
\label{eq:thisisit}
\sv=\bar{\lambda} f(\su) + hg(\su)
\end{equation}
is invariant for the dynamics of the numerical method:
\begin{equation}
\label{eq:highlight}
\sv_n = \bar{\lambda} f(\su_n) + hg(\su_n) \Longleftrightarrow \sv_{n+1} = \bar{\lambda} f(\su_{n+1}) + hg(\su_{n+1}).
\end{equation}

We will prove the existence of such a function $g$ by use of the contraction
mapping theorem to find fixed point of mapping $T$ defined in subsection \ref{ssec:I3}
below. We seek this fixed point in set $\Gamma$ which we now define:

\begin{definition}
\label{def:Gamma}
Let \(\gamma, \delta > 0\) be as in Lemmas \ref{lemma:gamma}, \ref{lemma:delta}. Define \(\Gamma \coloneqq \Gamma(\gamma,\delta)\) to be the closed 
subset of \(C(\Rd;\Rd)\) consisting of \(\gamma\)-bounded functions:
\[\|g\|_\Gamma \coloneqq \sup_{\xi \in \Rd} |g(\xi)| \leq \gamma, \quad \forall g \in \Gamma\]
that are \(\delta\)-Lipshitz:
\[|g(\xi) - g(\eta)| \leq \delta |\xi - \eta|, \quad \forall g \in \Gamma, \xi, \eta \in \Rd.\]
\end{definition}

\begin{theorem}
\label{thm:invman_existence}
Fix $\lambda \in (0,1).$ Suppose that $h$ is chosen small enough so that Assumption \ref{assump:h_small} holds. For \(n=0,1,2,\dots\), let \(\su_n\), \(\sv_n\) be the sequences given by \eqref{eq:two_step_method}. 
Then there is a \(\tau >0\) such that, for all \(h \in (0,\tau)\),  there is a 
unique \(g \in \Gamma\) such that \eqref{eq:highlight} holds. Furthermore,
\[|\sv_n - \bar{\lambda}f(\su_n) - hg(\su_n)| \leq (\lambda + h^2 \lambda \delta)^n |\sv_0 - \bar{\lambda}f(\su_0) - hg(\su_0)|\]
where \(\lambda + h^2 \lambda \delta < 1\).
\end{theorem}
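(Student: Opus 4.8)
The plan is to set up a fixed-point equation for $g$ derived from the invariance requirement \eqref{eq:highlight}, then apply the contraction mapping theorem on the complete metric space $\Gamma$ from Definition \ref{def:Gamma}, and finally extract the exponential attractivity estimate by a direct one-step contraction argument. First I would substitute the ansatz $\sv_n = \bar\lambda f(\su_n) + hg(\su_n)$ into the update \eqref{eq:two_step_method}. The $\su$-update becomes $\su_{n+1} = \su_n + h\lambda\sv_n + hf(\su_n + ha\sv_n)$, which with the ansatz reads $\su_{n+1} = \su_n + h\bigl(\lambda\bar\lambda f(\su_n) + h\lambda g(\su_n) + f(\su_n + ha\sv_n)\bigr)$; call the bracketed quantity $F(\su_n, g)$, so $\su_{n+1} = \su_n + hF(\su_n,g)$. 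The $\sv$-update is $\sv_{n+1} = \lambda\sv_n + f(\su_n + ha\sv_n) = \lambda\bar\lambda f(\su_n) + h\lambda g(\su_n) + f(\su_n + ha\sv_n)$. Requiring $\sv_{n+1} = \bar\lambda f(\su_{n+1}) + hg(\su_{n+1})$ and solving for $g(\su_{n+1})$ gives
\[
g(\su_{n+1}) = \frac{1}{h}\Bigl(\lambda\bar\lambda f(\su_n) + h\lambda g(\su_n) + f(\su_n + ha\sv_n) - \bar\lambda f(\su_n + hF(\su_n,g))\Bigr).
\]
Using $\lambda\bar\lambda = \bar\lambda - 1$ and a Taylor expansion of $f$ about $\su_n$, the $O(1/h)$ terms cancel (this is exactly the motivation that the manifold is a perturbation of $\sv = \bar\lambda f(\su)$), leaving a bounded expression in $g$. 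This defines the map $T: \Gamma \to C(\Rd;\Rd)$ by $(Tg)(\su_n) \coloneqq g(\su_{n+1})$ via the above, where $\su_{n+1}$ is viewed as a function of $\su_n$ through $\su_{n+1} = \su_n + hF(\su_n,g)$.

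Second, I would verify that $T$ maps $\Gamma$ into itself and is a contraction, for $h$ small enough (this is where Assumption \ref{assump:h_small} and the constants $\gamma,\delta$ of Lemmas \ref{lemma:gamma}, \ref{lemma:delta} are calibrated). The self-mapping part uses Assumption \ref{assump:one}: $f$ and its first two derivatives are bounded by $B_0, B_1, B_2$, so after the cancellation of the $1/h$ singularity the right-hand side is bounded by some $\gamma$ depending on these constants and on $\lambda, a$, and its Lipschitz constant in $\su_n$ is controlled by $\delta$, again chosen in terms of the $B_j$'s; one also needs that $\su_n \mapsto \su_{n+1}$ is a bi-Lipschitz change of variables, which holds since it is an $O(h)$ perturbation of the identity. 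The contraction part estimates $|(Tg_1 - Tg_2)(\su_n)|$: the difference involves $g_1 - g_2$ evaluated at nearby points plus derivative-of-$f$ factors hitting $F(\su_n,g_1) - F(\su_n,g_2)$, and the composition shift $\su_{n+1}^{(1)} - \su_{n+1}^{(2)} = h(F(\su_n,g_1) - F(\su_n,g_2))$; all cross terms carry an extra power of $h$, so the Lipschitz constant of $T$ on $\Gamma$ is $\lambda + O(h)$, strictly below $1$ for $h < \tau$. By the contraction mapping theorem on the closed set $\Gamma \subset C(\Rd;\Rd)$ (complete in the sup norm), there is a unique fixed point $g \in \Gamma$, and \eqref{eq:highlight} holds by construction.

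Third, for the attractivity estimate I would define $e_n \coloneqq \sv_n - \bar\lambda f(\su_n) - hg(\su_n)$ and derive a one-step recursion $|e_{n+1}| \le (\lambda + h^2\lambda\delta)|e_n|$ directly from \eqref{eq:two_step_method}, without assuming $e_n = 0$. Subtracting the invariance identity (which $g$ satisfies along the true manifold) from the actual iterate: $\sv_{n+1} = \lambda\sv_n + f(\su_n + ha\sv_n)$ versus the manifold value, and tracking how $e_n$ propagates, the leading term is $\lambda e_n$ coming from the $\lambda\sv_n$ factor; the remaining contributions are $f$-derivative terms multiplied by $h$ times the discrepancy in $\su_{n+1}$, which is itself $O(h)$ times $e_n$, and the $hg(\su_{n+1}) - hg(\su_{n+1}')$ difference contributes $h\cdot\delta\cdot O(h)|e_n|$, yielding the stated factor $\lambda + h^2\lambda\delta$. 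Iterating gives the geometric bound, and $\lambda + h^2\lambda\delta < 1$ follows from $\lambda < 1$ and $h$ small (this is also part of what Assumption \ref{assump:h_small} secures).

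The main obstacle is the self-mapping step: one must show that after dividing by $h$ the defining expression for $Tg$ is genuinely bounded by a fixed $\gamma$ and $\delta$-Lipschitz, uniformly in $g \in \Gamma$ — i.e., that the $O(1/h)$ singularity cancels cleanly and the residual constants close up consistently with the choices in Lemmas \ref{lemma:gamma} and \ref{lemma:delta}. This requires careful bookkeeping of the Taylor remainder of $f$ (needing the $C^3$/$B_2$ bound, since a second-order expansion with controlled remainder is what makes the $1/h$ terms cancel and leaves an $O(1)$ piece), and threading the mutual dependence of $\gamma$, $\delta$, and $\tau$ so the inequalities are simultaneously satisfiable; the contraction and attractivity estimates are then comparatively routine perturbation bookkeeping of the same type.
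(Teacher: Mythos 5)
Your proposal follows essentially the same route as the paper's proof: the graph transform $T$ obtained by pushing the ansatz forward one step and inverting the near-identity map $\xi\mapsto\xi+hz_g(\xi)$ (the paper's Lemma \ref{lemma:well_defined}), the algebraic cancellation of the $1/h$ terms via $\bar{\lambda}=\lambda\bar{\lambda}+1$, the self-map and contraction estimates on $\Gamma$ with contraction constant $\lambda+\mathcal{O}(h)$ (Lemmas \ref{lemma:gamma_to_gamma}, \ref{lemma:T_contraction}), and the one-step attractivity recursion producing the factor $\lambda+h^2\lambda\delta$ exactly as you describe. The only minor correction is that the cancellation of the $1/h$ singularity needs just the first-order integral Taylor remainder (hence only $B_1$ for boundedness); the $B_2$ bound enters only in the Lipschitz and contraction estimates.
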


The statement of Assumption \ref{assump:h_small}, and the proof of the preceding
theorem, are given in Appendix C. The assumption appears somewhat involved at first
glance but inspection reveals that it simply places an upper bound on the
learning rate $h,$ as detailed in Lemmas \ref{lemma:gamma}, \ref{lemma:delta}. 
The proof of the theorem rests on the Lemmas \ref{lemma:well_defined}, 
\ref{lemma:gamma_to_gamma} and \ref{lemma:T_contraction} which establish 
that the operator \(T\) is well-defined, maps \(\Gamma\) to \(\Gamma\), and 
is a contraction on \(\Gamma\). The operator $T$ is defined, and expressed
in a helpful form for the purposes of analysis, in the next subsection. 

In the next subsection
we obtain the leading order approximation for $g$, given in equation \eqref{eq:getg}.
Theorem \ref{thm:invman_existence} implies that the large-time dynamics
are governed by the dynamics on the invariant manifold. Substituting the 
leading order approximation for $g$ into the invariant manifold \eqref{eq:thisisit} 
and using this expression in the definition \eqref{eq:define}
shows that

\begin{subequations}
\label{eq:another}
\begin{align}
\label{eq:another_a}
\sv_n&=-(1-{\lambda})^{-1} \nabla \left ( \Phi(\su_n) + \frac{1}{2} h \bar{\lambda}(\bar{\lambda} - a) |\nabla \Phi(\su_n)|^2 \right ),\\
\label{eq:another_b}
\su_{n}&=\su_{n-1}
 - h(1-{\lambda})^{-1} \nabla \left ( \Phi(\su_n) + \frac{1}{2} h \bar{\lambda}(\bar{\lambda} - a) |\nabla \Phi(\su_n)|^2 \right ).
\end{align}
\end{subequations}
Setting 
\begin{equation}
\label{eq:mp_c}
c=\bar{\lambda} \left( \bar{\lambda}-a+\frac{1}{2} \right)
\end{equation}
we see that for large time the dynamics of momentum methods, including
HB and NAG, are approximately those of the modified gradient flow 
\begin{equation}
\label{eq:approxg2}
\dtfrac{u}
= - (1-{\lambda})^{-1} \nabla \Phi_h(u) 
\end{equation}
with
\begin{equation}
\label{eq:mp}
\Phi_h(u)=\Phi(u)+\frac12 hc|\nabla \Phi(u)|^2.
\end{equation}
To see this we proceed as follows. Note that from \eqref{eq:approxg2}
\[\frac{d^2u}{dt^2} = - \frac{1}{2} (1-\lambda)^{-2} \nabla |\nabla \Phi(u)|^2 + \mathcal{O}(h)\]
then Taylor expansion shows that, for $u_n=u(nh)$,
\begin{align*}
u_n &= u_{n-1} + h \dot{u}_n - \frac{h^2}{2} \ddot{u}_n + \mathcal{O}(h^3) \\
&=u_{n-1} - h \bar{\lambda} \left( \nabla \Phi(u_n) + \frac{1}{2}hc\nabla |\nabla \Phi(u_n)|^2 \right) + \frac{1}{4}h^2 \bar{\lambda}^2 \nabla |\nabla \Phi(u_n)|^2 + \mathcal{O}(h^3)
\end{align*}
where we have used that
$$Df(u)f(u)=\frac12 \nabla \left(|\nabla \Phi(u)|^2\right).$$
Choosing $c=\bar{\lambda}(\bar{\lambda}-a+1/2)$ we see that 
\begin{equation}
\label{eq:approxflow_taylor}
u_n=u_{n-1}
 - h(1-{\lambda})^{-1} \nabla \left ( \Phi(u_n) + \frac{1}{2} h \bar{\lambda}(\bar{\lambda} - a) |\nabla \Phi(u_n)|^2 \right ) + \mathcal{O}(h^3).
\end{equation}
Notice that comparison
of \eqref{eq:another_b} and \eqref{eq:approxflow_taylor} shows that, on the invariant manifold, the dynamics
are to \(\mathcal{O}(h^2)\) the same as the equation \eqref{eq:approxg2}; this is because the
truncation error between \eqref{eq:another_b} and \eqref{eq:approxflow_taylor} is \(\mathcal{O}(h^3)\). 

Thus we have proved:

\begin{theorem}
\label{thm:conv_to_modifiedgf}
Suppose that the conditions of Theorem \ref{thm:invman_existence} hold.
Then for initial data started on the invariant manifold and 
any \(T \geq 0\), there is a constant \(C = C(T) > 0\) such that
\[\sup_{0 \leq nh \leq T} |u_n - \su_n| \leq Ch^2,\]
where $u_n=u(nh)$ solves the modified equation \eqref{eq:approxg2}
with $c=\bar{\lambda}(\bar{\lambda}-a+1/2)$. 
\end{theorem}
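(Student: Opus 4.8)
The plan is to run a standard discrete‑stability (Lady‑Windermere/Gronwall) argument comparing two one‑step maps: the map generated by the restriction of \eqref{eq:two_step_method} to the invariant manifold from Theorem \ref{thm:invman_existence}, and the map obtained by sampling the modified flow \eqref{eq:approxg2} at the grid points $t=nh$. The calculation immediately preceding the theorem already displays both maps, namely \eqref{eq:another_b} and \eqref{eq:approxflow_taylor}, and shows that they coincide up to an $\Bigo(h^3)$ residual; what remains is to make those residuals rigorous and uniform, and then to propagate a local $\Bigo(h^3)$ discrepancy into a global $\Bigo(h^2)$ error.

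\emph{Step 1 (the two one‑step relations).} Take initial data on the manifold, $\sv_0=\bar{\lambda}f(\su_0)+hg(\su_0)$, and iterate \eqref{eq:two_step_method}. Since $\su_n=\su_{n-1}+h\sv_n$ and, by invariance, $\sv_n=\bar{\lambda}f(\su_n)+hg(\su_n)$ for all $n$, we have $\su_n=\su_{n-1}+h\bar{\lambda}f(\su_n)+h^2g(\su_n)$ exactly; replacing $g$ by its leading‑order form \eqref{eq:getg} costs $h^2(g(\su_n)-g_{\mathrm{lead}}(\su_n))$, which is $\Bigo(h^3)$ uniformly once one records from the fixed‑point construction (using the uniform contraction constant and Assumption \ref{assump:one}) that $\|g-g_{\mathrm{lead}}\|_\infty\le Ch$. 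This turns \eqref{eq:another_b} into $\su_n=\su_{n-1}-h\bar{\lambda}\nabla P_h(\su_n)+E_n$ with $\sup_n|E_n|\le C_1h^3$, where $P_h:=\Phi+\tfrac12 h\bar{\lambda}(\bar{\lambda}-a)|\nabla\Phi|^2$ is the potential appearing inside the gradient in \eqref{eq:another_b} (note this is \emph{not} the $\Phi_h$ of \eqref{eq:mp}; the two differ exactly by the $-\tfrac{h^2}{2}\ddot u$ term in the Taylor expansion). For the other map, let $u$ solve \eqref{eq:approxg2} with $c=\bar{\lambda}(\bar{\lambda}-a+\tfrac12)$ and $u(0)=\su_0$; for $h$ in a fixed bounded range and under Assumption \ref{assump:one} this has a unique global solution with uniformly bounded velocity and acceleration. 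Taylor expanding $u_{n-1}$ about $t=nh$, substituting the expressions for $\dot u_n,\ddot u_n$ obtained by differentiating \eqref{eq:approxg2}, and using $Df(u)f(u)=\tfrac12\nabla(|\nabla\Phi(u)|^2)$ reproduces exactly \eqref{eq:approxflow_taylor}, i.e. $u_n=u_{n-1}-h\bar{\lambda}\nabla P_h(u_n)+\widetilde E_n$ with $\sup_n|\widetilde E_n|\le C_2h^3$. Thus $\su_n$ and $u_n$ satisfy the \emph{same} implicit one‑step relation up to an $\Bigo(h^3)$ forcing.

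\emph{Step 2 (stability and Gronwall).} Set $e_n:=u_n-\su_n$. Subtracting the two relations and writing $\nabla P_h(u_n)-\nabla P_h(\su_n)=A_ne_n$ with $A_n:=\int_0^1\nabla^2P_h(\su_n+te_n)\,dt$, where $\|A_n\|\le B_1+C_3h=:L$ by Assumption \ref{assump:one}, gives the implicit recursion $(I+h\bar{\lambda}A_n)e_n=e_{n-1}+\rho_n$ with $|\rho_n|\le(C_1+C_2)h^3$. Shrinking $\tau$ so that $h\bar{\lambda}L\le\tfrac12$ for $h\in(0,\tau)$ makes $I+h\bar{\lambda}A_n$ invertible with $\|(I+h\bar{\lambda}A_n)^{-1}\|\le(1-h\bar{\lambda}L)^{-1}\le 1+2h\bar{\lambda}L$, so $|e_n|\le(1+Ch)|e_{n-1}|+C'h^3$. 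Since $u_0=u(0)=\su_0$ we have $e_0=0$, and the discrete Gronwall inequality yields $|e_n|\le C'h^3\sum_{k=1}^n(1+Ch)^{\,n-k}\le\frac{C'}{C}e^{CT}h^2$ for all $n$ with $nh\le T$, which is the assertion with $C(T)=\tfrac{C'}{C}e^{CT}$.

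\emph{Main obstacle.} I expect the crux to be securing the \emph{uniform} $\Bigo(h^3)$ local truncation error for the modified flow in Step 1: this needs a uniform‑in‑time bound on the third time‑derivative of the solution of \eqref{eq:approxg2} (equivalently, control of one derivative of $\nabla\Phi$ beyond what Assumption \ref{assump:one} nominally gives, since $\nabla\Phi_h$ already contains $\nabla^2\Phi$), or else a more careful integral‑remainder estimate that exploits the structure of \eqref{eq:approxg2} to keep the constant in $\sup_n|\widetilde E_n|\le C_2h^3$ finite. A secondary, routine point is to make the bound $\|g-g_{\mathrm{lead}}\|_\infty=\Bigo(h)$ explicit from the contraction estimates underlying Theorem \ref{thm:invman_existence}. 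Everything else — invertibility of $I+h\bar{\lambda}A_n$, the Gronwall step, and the identification $e_0=0$ — is elementary once these two estimates are in hand.
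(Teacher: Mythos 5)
Your proposal follows essentially the same route as the paper: the paper's proof of Theorem \ref{thm:conv_to_modifiedgf} is precisely the comparison of \eqref{eq:another_b} with \eqref{eq:approxflow_taylor} carried out in Section \ref{ssec:MR3}, concluding from the $\Bigo(h^3)$ per-step discrepancy between the manifold-restricted iteration and the Taylor expansion of \eqref{eq:approxg2} that the global error is $\Bigo(h^2)$. Your Step 2 (the implicit recursion plus discrete Gronwall) and the two estimates you flag as obstacles ($\|g-g_{\mathrm{lead}}\|_\infty=\Bigo(h)$ from the contraction, and the uniform remainder bound for the modified flow) are exactly the details the paper leaves implicit, so your write-up is consistent with, and if anything more complete than, the published argument.
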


\subsection{Intuition}
\label{ssec:I3}

We will define mapping \(T: C(\Rd;\Rd) \to C(\Rd;\Rd)\) via the equations 
\begin{equation}
\label{eq:two_step}
\begin{split}
p = \xi + h \lambda\bigl(\bar{\lambda} f(\xi) + hg(\xi)\bigr)  + 
hf\Bigl(\xi + ha\bigl(\bar{\lambda} f(\xi) + hg(\xi)\bigr)\Bigr) \\
\bar{\lambda} f(p)+h(Tg)(p) = \lambda \bigl(\bar{\lambda} f(\xi) + hg(\xi)\bigr) +
f\Bigl(\xi + ha\bigl(\bar{\lambda} f(\xi) + hg(\xi)\bigr)\Bigr). 
\end{split}
\end{equation}
A fixed point of the mapping $g \mapsto Tg$ will give function $g$ so that, under
\eqref{eq:two_step}, identity \eqref{eq:highlight} holds.
Later we will show that, for $g$ in $\Gamma$ and all $h$ sufficiently small,
 $\xi$ can be found from (\ref{eq:two_step}a)
for every $p$, and that thus (\ref{eq:two_step}b) defines a mapping from $g \in \Gamma$
into $Tg \in C(\Rd;\Rd).$ We will then show that, for $h$ sufficiently small,
$T: \Gamma \mapsto \Gamma$ is a contraction. 

For any \(g \in C(\Rd;\Rd)\) and \(\sun \in \Rd\) define
\begin{align}
\label{eq:wg}
w_g(\sun) &\coloneqq \bar{\lambda} f(\sun) + h g(\sun) \\
\label{eq:zg}
z_g(\sun) &\coloneqq \lambda w_g(\sun) + f\bigl(\sun + haw_g(\sun)\bigr).
\end{align}
With this notation the fixed point mapping \eqref{eq:two_step} for $g$ may be written
\begin{equation}
\label{eq:two_step2}
\begin{split}
p = \xi + hz_g(\xi),\\
\bar{\lambda} f(p)+h(Tg)(p) = z_g(\xi).
\end{split}
\end{equation}

Then, by Taylor expansion, 
\begin{align}
\label{eq:I1}
\begin{split}
f\Bigl(\sun + ha\bigl(\bar{\lambda} f(\sun) + hg(\sun)\bigr)\Bigr) &= f\bigl(\sun + haw_g(\sun)\bigr) \\
&= f(\sun) + ha \int_0^1  Df\bigl(\sun + shaw_g(\sun)\bigr)w_g(\sun) ds \\
&= f(\sun) + ha I^{(1)}_g (\sun)
\end{split}
\end{align}
where the last line defines \(I^{(1)}_g\). Similarly
\begin{align}
\label{eq:I2}
\begin{split}
f(p) &= f(\sun + hz_g(\sun)) \\
&= f(\sun) + h \int_0^1 Df\bigl(\sun + shz_g(\sun)\bigr)z_g(\sun)ds \\
&= f(\sun) + h I^{(2)}_g(\sun),
\end{split}
\end{align}
where the last line now defines \(I^{(2)}_g\).
Then (\ref{eq:two_step}b) becomes
\[\bar{\lambda}\bigl(f(\sun) + h I^{(2)}_g(\sun)\bigr) + h (Tg)(p) = \lambda \bar{\lambda} f(\sun) + h \lambda g(\sun) + f(\sun) + haI^{(1)}_g(\sun)\]
and we see that 
\[(Tg)(p) = \lambda g(\sun) + a I^{(1)}_g(\sun) - \bar{\lambda}I^{(2)}_g(\sun).\]
In this light, we can rewrite the defining equations \eqref{eq:two_step} for \(T\) as 
\begin{align}
\label{eq:p}
p &= \xi + hz_g(\xi), \\
\label{eq:Tg}
(Tg)(p) &= \lambda g(\xi) + a I^{(1)}_g(\xi) - \bar{\lambda}I^{(2)}_g(\xi). 
\end{align}
for any \(\xi \in \Rd\).

Perusal of the above definitions reveals that, to leading order in $h$,
$$w_g(\xi)=z_g(\xi)=\bar{\lambda}f(\xi), I^{(1)}_g(\xi)=I^{(2)}_g(\xi)=\bar{\lambda}
Df(\xi)f(\xi).$$
Thus setting $h=0$ in \eqref{eq:p}, \eqref{eq:Tg} shows that, to leading order
in $h$,
\begin{equation}
\label{eq:getg}
g(p) = \bar{\lambda}^2(a-\bar{\lambda})Df(p)f(p).
\end{equation}
Note that since \(f(p) = - \nabla \Phi(p)\), \(Df\) is the negative Hessian of \(\Phi\) and is thus symmetric.
Hence we can write $g$ in gradient form, leading to
\begin{equation}
\label{eq:getg2}
g(p) = \frac12 \bar{\lambda}^2(a-\bar{\lambda})\nabla\bigl(|\nabla \Phi(p)|^2\bigr).
\end{equation}

\begin{remark}
This modified potential \eqref{eq:mp} also arises in the construction of Lyapunov functions
for the one-stage theta method  -- see Corollary 5.6.2 in \cite{stuart1998dynamical}.
\end{remark}

\begin{figure}[t]
    \centering
    \begin{subfigure}[b]{0.31\textwidth}
        \includegraphics[width=\textwidth]{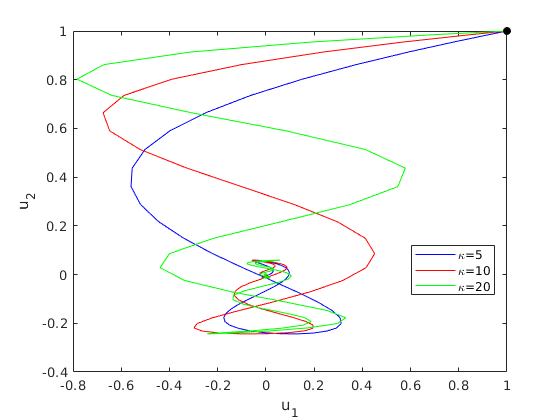}
        \caption{HB: \(\su_n\) given by \eqref{eq:two_step_method}}
    \end{subfigure}
    ~ 
    \begin{subfigure}[b]{0.31\textwidth}
        \includegraphics[width=\textwidth]{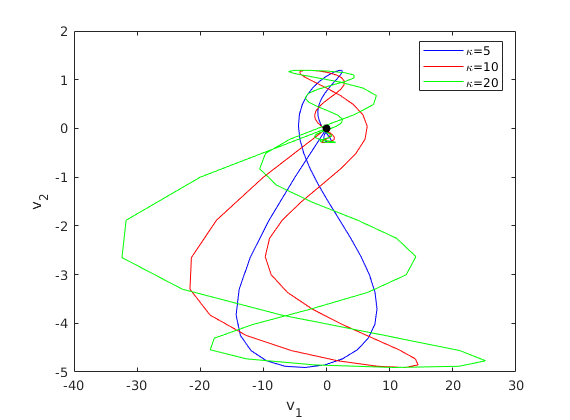}
        \caption{HB: \(\sv_n\) given by \eqref{eq:two_step_method}}
    \end{subfigure}
     ~ 
    \begin{subfigure}[b]{0.31\textwidth}
        \includegraphics[width=\textwidth]{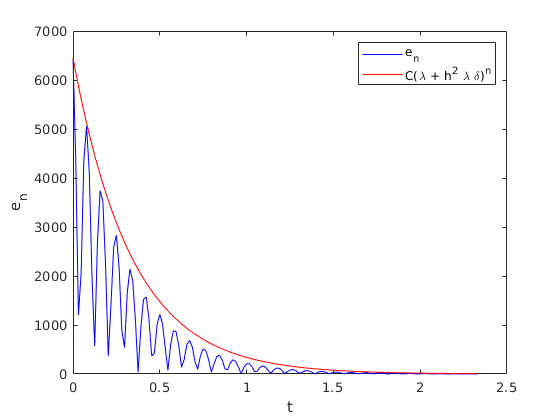}
        \caption{HB: \(\mathsf{e}_n\) given by \eqref{eq:ima2}}
    \end{subfigure}

     \begin{subfigure}[b]{0.31\textwidth}
        \includegraphics[width=\textwidth]{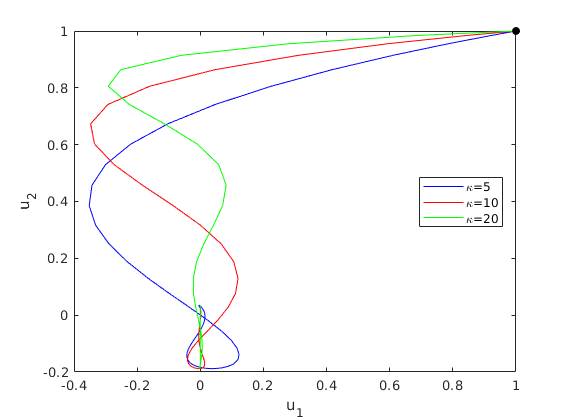}
        \caption{NAG: \(\su_n\) given by \eqref{eq:two_step_method}}
    \end{subfigure}
    ~ 
    \begin{subfigure}[b]{0.31\textwidth}
        \includegraphics[width=\textwidth]{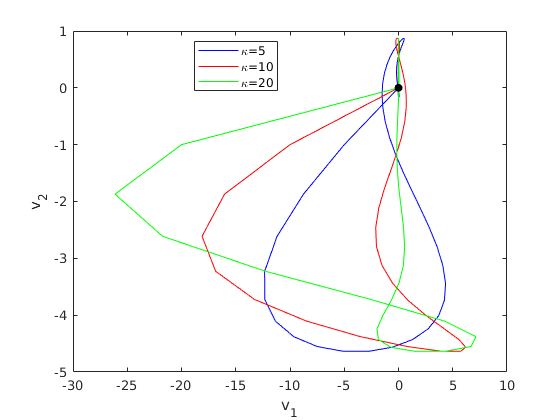}
        \caption{NAG: \(\sv_n\) given by \eqref{eq:two_step_method}}
    \end{subfigure}
     ~ 
    \begin{subfigure}[b]{0.31\textwidth}
        \includegraphics[width=\textwidth]{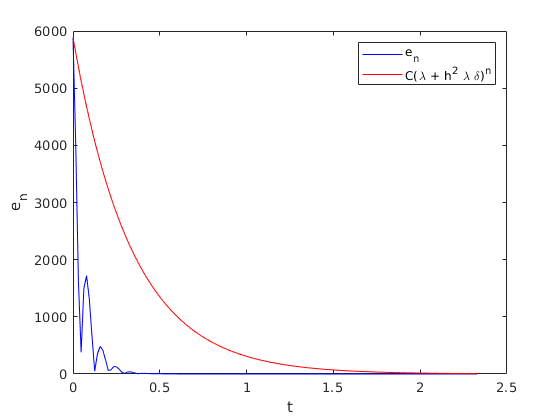}
        \caption{NAG: \(\mathsf{e}_n\) given by \eqref{eq:ima2}}
    \end{subfigure}
    \caption{Invariant manifold for HB and NAG with \(h=2^{-6}\) and \(\lambda = 0.9\) on the two-dimensional problem \(\Phi(u) = \frac{1}{2} \langle u, Q u \rangle\), varying the condition number of \(Q\). Panels (c), (f) show the distance from the invariant manifold for the largest condition number \(\kappa = 20\).} 
    \label{fig:invman}
\end{figure}

\subsection{Numerical Illustration}
\label{ssec:NI}

In Figure \ref{fig:invman} panels (a),(b),(d),(e), we plot the components $\su_n$ and $\sv_n$
found by solving \eqref{eq:two_step_method} with initial conditions $\su_0=(1,1)^T$ and 
$\sv_n=(0,0)^T$ in the case where  \(\Phi(u) = \frac{1}{2} \langle u, Q u \rangle\). 
These initial conditions correspond to initializing the map off the invariant manifold. 
To leading order in $h$ the invariant manifold is given by 
(see equation \eqref{eq:another})
\begin{equation}
\label{eq:ima1}
v= - (1-{\lambda})^{-1} \nabla \left ( \Phi(u) + \frac{1}{2} h \bar{\lambda}(\bar{\lambda} - a) |\nabla \Phi(u)|^2 \right ).
\end{equation}
To measure the distance of the trajectory shown in panels (a),(b),(d),(e) from the
invariant manifold we define
\begin{equation}
\label{eq:ima2}
\se_n= \left | \sv_n+ (1-{\lambda})^{-1} \nabla \left ( \Phi(\su_n) + \frac{1}{2} h \bar{\lambda}(\bar{\lambda} - a) |\nabla \Phi(\su_n)|^2 \right ) \right |.
\end{equation}
Panels (c),(f) show the evolution of $\se_n$ as well as the (approximate) bound on it found
from substituting the leading order approximation of $g$ into the following
upper bound from Theorem \ref{thm:invman_existence}: 
$$(\lambda + h^2 \lambda \delta)^n |\sv_0 - \bar{\lambda}f(\su_0) - hg(\su_0)|.$$

\section{Deep Learning Example}
\label{ssec:DL}

Our theory is developed under quite restrictive assumptions, in order
to keep the proofs relatively simple and to allow a clearer conceptual
development. The purpose of the numerical experiments in this section
is twofold: firstly to demonstrate that our theory sheds light on a stochastic
version of gradient descent applied, furthermore,
to a setting in which the objective
function does not satisfy the global assumptions which 
facilitate our analysis; and second to show that methods implemented
as we use them here (with learning-rate independent momentum, fixed
at every step of the iteration) can out-perform other choices on
specific problems.

Our numerical experiments in this section are
undertaken with in the context of the example 
given in \cite{importanceofinitmom}. We train a deep autoencoder, using 
the architecture of \cite{pretraining} on the MNIST dataset \cite{mnist}. Since our work is 
concerned only with optimization and not generalization, we present our results only on the
training set of 60,000 images and ignore the testing set. We fix an initialization of the 
autoencoder following \cite{xavier} and use it to test every optimization method. 
Furthermore, we fix a batch size of 200 and train for 500 epochs, not shuffling the data set during training so that 
each method sees the same realization of the noise. We use the mean-squared error as 
our loss function.

\begin{figure}[t]
\begin{center}
\begin{tabular}{ |l|c|c|c|c|c|c|c| } 
\hline
  & \(h=2^0\) & \(h=2^{-1}\) & \(h=2^{-2}\) & \(h=2^{-3}\) & \(h=2^{-4}\) & \(h=2^{-5}\) & \(h=2^{-6}\) \\
\specialrule{.1em}{.05em}{.05em} 
GF & n/a & 4.3948 & 4.5954 & 5.6769 & \bf{7.0049} & \bf{8.6468} & \bf{10.6548} \\
\hline
HB & 3.6775 & 4.0157 & 4.5429 & 5.6447 & 7.0720 & 8.7070 & 10.6848 \\
\hline
NAG & \bf{3.2808} &   \bf{3.7166} &  \bf{4.4579} &   \bf{5.6087} &  7.0557 & 8.6987 &  10.6814 \\
\specialrule{.1em}{.05em}{.05em} 
Wilson & 6.7395 &   7.5177 &    8.3491 &   9.2543 &   10.2761 &   11.3776 &   12.4123 \\
\hline
HB-\(\mu\) & 5.7099 &  6.6146 &  7.6202 &  8.6629 &   9.7838 &  11.0039 &  12.1743 \\
\hline
NAG-\(\mu\) & \bf{5.6867} & \bf{6.6033} &  \bf{7.6131} & \bf{8.6556} &  \bf{9.7783} &  \bf{11.0015} &  \bf{12.1738} \\ 
\hline
\end{tabular}
\end{center}
\caption{Final training errors for the autoencoder on  MNIST for six 
training methods over different learning rates. GF refers to equation \eqref{eq:rgf_disc} while
HB and NAG to \eqref{eq:general_discrete} all with fixed \(\lambda = 0.9\).}
\label{fig:final_training}
\end{figure}

\begin{figure}[t]
    \centering
    \begin{subfigure}[b]{0.45\textwidth}
        \includegraphics[width=\textwidth]{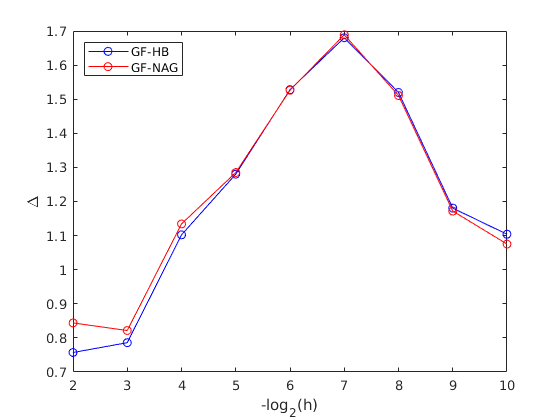}
        \caption{HB, NAG to \eqref{eq:rgf_disc}}
    \end{subfigure}
    ~ 
    \begin{subfigure}[b]{0.45\textwidth}
        \includegraphics[width=\textwidth]{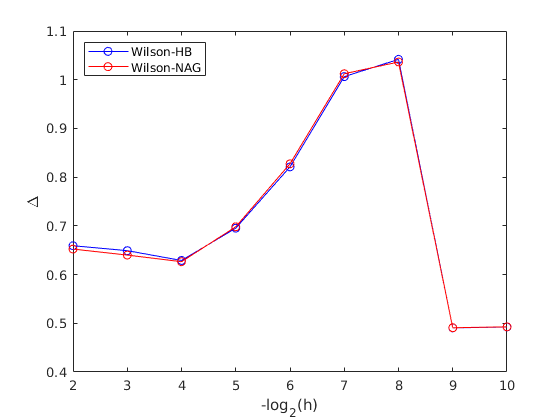}
        \caption{HB-\(\mu\), NAG-\(\mu\) to \eqref{eq:wilson_disc}}
    \end{subfigure}

    \caption{The numerical rate of convergence for the parameters of
    the autoencoder, as a function of the learning rate h, of HB
    and NAG to \eqref{eq:rgf_disc} (a), as well as of HB-\(\mu\) and NAG-\(\mu\) to \eqref{eq:wilson_disc}
    (b).} 
    \label{fig:autoencoder_conv}
\end{figure}

We compare HB and NAG given by \eqref{eq:general_discrete} to the re-scaled gradient 
flow \eqref{eq:rgf} which we discretize in the standard way to yield the numerical 
method
\begin{equation}
\label{eq:rgf_disc}
\su_{n+1} = \su_n - \frac{h}{(1-\lambda)} \nabla \Phi(\su_n),
\end{equation}
hence the momentum term \(\lambda\) only acts to re-scale the learning rate.
We do not test against equation \eqref{eq:visco} because, to discretize it 
faithfully, we would need to use a time-step much lower than \(h\) (because 
\eqref{eq:visco} contains a term of order \(h\)), but doing so would mean that
we need to train for many more epochs compared to HB and NAG so that the same 
final time is reached. This, in turn, implies that the methods would see
different realization of the noise. Thus, to compare them well, we would need
to perform a Monte Carlo simulation, however, since we do not state any 
of our results in a stochastic setting, we leave this for future work. 

We also compare our results to those of \cite{wilsoneq} which analyze HB and 
NAG in the setting where \(\Phi\) is \(\mu\)-strongly convex and \(\lambda\) is
given by \eqref{eq:lambda_mu} that is
\[
\lambda  = \frac{1 - \sqrt{\mu h}}{1 + \sqrt{\mu h}}.
\]
They obtain the limiting equation 
\[\ddot{u} + 2\sqrt{\mu} \dot{u} + \nabla \Phi(u) = 0\]
which we discretize via a split-step method to yield
\begin{align}
\label{eq:wilson_disc}
\begin{split}
\su_{n+1} &= \su_n + \frac{1}{2\sqrt{\mu}} \left ( 1- e^{-2\sqrt{ \mu h}} \right ) \sv_n \\
\sv_{n+1} &= e^{-2\sqrt{ \mu h}} \sv_n - \sqrt{h} \nabla \Phi(\su_{n+1})
\end{split}
\end{align}
where we have mapped the the time-step \(h\) in HB and 
NAG to \(\sqrt{h}\) as in done in \cite{wilsoneq}. We choose this discretization 
because it allows us to directly solve for the linear parts of the ODE (in 
the enlarged state-space), yielding a more accurate approximation than the 
forward-Euler method used to obtain \eqref{eq:rgf_disc}. A detailed derivation is given in Appendix D.
We will refer to the method in equation \eqref{eq:wilson_disc} as Wilson. 
Further we refer to equation \eqref{eq:general_discrete} 
with \(\lambda\) given by \eqref{eq:lambda_mu} and \(a=0\) as HB-\(\mu\) and 
equation \eqref{eq:general_discrete} with \(\lambda\) given by \eqref{eq:lambda_mu} and \(a=\lambda\) as NAG-\(\mu\).  
Since deep neural networks are not strongly convex, there is no single
optimal choice of $\mu$; we simply set \(\mu=1\) in our experiments. 

Figure \ref{fig:final_training} gives the final training errors for each 
method for several learning rates. We were unable to train the autoencoder 
using \eqref{eq:rgf_disc} with \(h=1\) since \(\lambda=0.9\) implies 
an effective learning rate of \(10\) for which the system blows up.
In general, 
NAG is the best performing method for relatively 
large \(h\) which is an observation that is consistently made in the 
deep learning literature. Further, we note that as the learning rate 
decreases, the final errors become closer indicating convergence to the 
appropriate limiting equations. Figure \ref{fig:final_training} showcases the practical effectiveness of momentum methods as they 
provide a way of discretizing the gradient flow \eqref{eq:GD} with a large effective learning 
rate that forward Euler cannot accommodate. From this perspective, we can view momentum
methods as providing a more stable discretization to gradient flows in a 
manner illustrated by \eqref{eq:approxg2}.
Such a viewpoint informs the works \cite{scieur2017integration,betancourt2018symplectic,runge-kuttadisc}.

To further illustrate the point of convergence to the limiting equation, we 
compute the numerical rate of convergence, defined in Section \ref{ssec:N1}, 
as a function of \(h\)
for the neural network parameters between \eqref{eq:rgf_disc} and HB and NAG
as well as between \eqref{eq:wilson_disc} and HB-\(\mu\) and NAG-\(\mu\). Figure 
\ref{fig:autoencoder_conv} gives the results. We note that this rate is around 
1 as predicted by our theory while the rate for \eqref{eq:wilson_disc} is around 
0.5 which is also consistent with the theory in \cite{wilsoneq}.

\section{Conclusion}
\label{sec:C}

Together, equations \eqref{eq:rgf}, \eqref{eq:visco} and \eqref{eq:approxg2}
describe the dynamical systems which are approximated by momentum methods, when
implemented with fixed momentum,
in a manner made precise by the four theorems in this paper. The insight obtained
from these theorems sheds light on how momentum methods perform optimization tasks.


\acks{Both authors are supported, in part, by the US National Science Foundation (NSF)
grant DMS 1818977, the US Office of Naval Research (ONR) grant N00014-17-1-2079, and
the US Army Research Office (ARO) grant W911NF-12-2-0022. Both authors are also
grateful to the anonymous reviewers for their invaluable suggestions which have 
helped to significantly strengthen this work.}


\newpage

\appendix

\section*{Appendix A}
\label{sec:AA}

\begin{proof}[of Theorem \ref{thm:conv_to_gf}]
Taylor expanding yields
\[u_{n+1} = u_n + h \bar{\lambda} f(u_n) + \Bigo(h^2)\]
and
\[u_n = u_{n-1} + h \bar{\lambda} f(u_n) + \Bigo(h^2).\]
Hence
\[(1+\lambda)u_n - \lambda u_{n-1} = u_n + h \lambda \bar{\lambda} f(u_n) + \Bigo(h^2).\]
Subtracting the third identity from the first, we find that
\[u_{n+1} - \left ( (1+\lambda) u_n - \lambda u_{n-1} \right ) = hf(u_n) + \Bigo(h^2)\]
by noting \(\bar{\lambda} - \bar{\lambda} \lambda = 1\).
Similarly,
\[a(u_n - u_{n-1}) = h a \bar{\lambda} f(u_n) + \Bigo(h^2)\]
hence Taylor expanding yields
\begin{align*}
f(u_n + a(u_n-u_{n-1})) &= f(u_n) + a Df(u_n)(u_n - u_{n-1})  \\
&\,\,\,\,\, + a^2 \int_0^1 (1-s) D^2f(u_n + sa(u_n-u_{n-1}))[u_n-u_{n-1}]^2 ds \\
&= f(u_n) + h a \bar{\lambda} Df(u_n)f(u_n) + \Bigo(h^2).
\end{align*}
From this, we conclude that
\[hf(u_n + a(u_n-u_{n-1})) = hf(u_n) + \Bigo(h^2)\]
hence
\[u_{n+1} = (1+\lambda)u_n - \lambda u_{n-1} + hf(u_n + a(u_n-u_{n-1})) + \Bigo(h^2).\]
Define the error \(e_n \coloneqq u_n - \mathsf{u}_n\) then
\begin{align*}
e_{n+1} &= (1 + \lambda)e_n - \lambda e_{n-1} + h \left ( f(u_n + a(u_n - u_{n-1})) - f(\mathsf{u}_n + a(\mathsf{u}_n - \mathsf{u}_{n-1})) \right ) + \Bigo(h^2) \\
&= (1 + \lambda)e_n - \lambda e_{n-1} + h \mathsf{M}_n((1+a)e_n - a e_{n-1}) + \Bigo(h^2)
\end{align*}
where, from the mean value theorem, we have
\[\mathsf{M}_n = \int_0^1 Df \Bigl ( s \bigl( u_n + a(u_n-u_{n-1}) \bigr) + \bigl( 1-s \bigr) \bigl( \mathsf{u}_n + a(\mathsf{u}_n-\mathsf{u}_{n-1}) \bigr) \Bigr ) ds.\]
Now define the concatenation \(E_{n+1} \coloneqq [e_{n+1}, e_n] \in \R^{2d}\) then
\[E_{n+1} = A^{(\lambda)} E_n + h A^{(a)}_n E_n + \Bigo(h^2)\]
where \(A^{(\lambda)}, A^{(a)}_n \in \R^{2d \times 2d}\) are the block matrices
\[A^{(\lambda)} \coloneqq
\begin{bmatrix}
(1+\lambda)I & - \lambda I \\
I & 0I
\end{bmatrix}, \quad
A^{(a)}_n \coloneqq
\begin{bmatrix}
(1+a) \mathsf{M}_n & - a\mathsf{M}_n \\
0I & 0 I
\end{bmatrix}\]
with \(I \in \R^{d \times d}\) the identity. We note that \(A^{(\lambda)}\) has minimal polynomial
\[\mu_{A^{(\lambda)}}(z) = (z-1)(z-\lambda)\]
and is hence diagonalizable. Thus there is a norm on \(\|\cdot\|\) on \(\R^{2d}\) such that its 
induced matrix norm \(\|\cdot\|_m\) satifies \(\|A^{(\lambda)}\|_m = \rho(A^{(\lambda)})\)
where \(\rho : \R^{2d \times 2d} \to \R_+\) maps a matrix to its spectral radius. Hence, since
\(\lambda \in (0,1)\), we have \(\|A^{(\lambda)}\|_m = 1\). Thus
\[\|E_{n+1}\| \leq (1 + h \|A^{(a)}_n\|_m) \|E_n\| + \Bigo(h^2).\]
Then, by finite dimensional norm equivalence, there is a constant \(\alpha > 0\), independent of \(h\), such that
\begin{align*}
\|A^{(a)}_n\|_m &\leq \alpha\left  \| \begin{bmatrix} 1+a & -a \\ 0 & 0 \end{bmatrix} \otimes \mathsf{M}_n \right \|_2 \\
&= \alpha \sqrt{2a^2 + 2a + 1} \|\mathsf{M}_n\|_2
\end{align*}
where \(\|\cdot\|_2\) denotes the spectral 2-norm. Using Assumption \ref{assump:one}, we have 
\[\|\mathsf{M}_n\|_2 \leq B_1\]
thus, letting \(c \coloneqq \alpha \sqrt{2a^2 + 2a + 1}  B_1\), we find
\[\|E_{n+1}\| \leq (1+hc)\|E_n\| + \Bigo(h^2).\]
Then, by Gr{\"o}nwall lemma,
\begin{align*}
\|E_{n+1}\| &\leq (1+hc)^n \|E_1\|_n + \frac{(1+hc)^{n+1}-1}{ch} \Bigo(h^2) \\
&= (1+hc)^n \|E_1\|_n + \Bigo(h)
\end{align*}
noting that the constant in the \(\Bigo(h)\) term is bounded above in terms of \(T\), but independently of \(h\).
Finally, we check the initial condition
\[E_1 = 
\begin{bmatrix}
u_1 - \mathsf{u}_1 \\
u_0 - \mathsf{u}_0
\end{bmatrix} = 
\begin{bmatrix}
h(\bar{\lambda}-1)f(\mathsf{u}_0) + \Bigo(h^2) \\
0
\end{bmatrix} = \Bigo(h)\]
as desired.
\end{proof}

\section*{Appendix B}
\label{sec:AB}

\begin{proof}[of Theorem \ref{thm:conv_to_visco}]
Taylor expanding yields 
\[u_{n \pm 1} = u_n \pm h \dot{u}_n + \frac{h^2}{2} \ddot{u}_n \pm \frac{h^3}{2} I^{\pm}_n\]
where
\[I^\pm_n = \int_0^1 (1-s)^2 \dddot{u}((n \pm s)h)ds.\]
Then using equation \eqref{eq:visco}
\begin{align}
\label{eq:app_b_thm_eq1}
\begin{split}
u_{n+1} - u_n - \lambda (u_n - u_{n-1}) &= h(1-\lambda)\dot{u}_n + \frac{h^2}{2} (1+\lambda) \ddot{u}_n + \frac{h^3}{2}(I^+_n - \lambda I^-_n) \\
&= hf(u_n) + h^2a(1-\lambda) \ddot{u}_n + \frac{h^3}{2}(I^+_n - \lambda I^-_n).
\end{split}
\end{align}
Similarly
\[a(u_n - u_{n-1}) = ha\dot{u}_n - \frac{h^2}{2}a \ddot{u}_n + \frac{h^3}{2}aI^-_n\]
hence
\[f(u_n + a(u_n-u_{n-1})) = f(u_n) + haDf(u_n) \dot{u}_n - Df(u_n) \left ( \frac{h^2}{2}a \ddot{u}_n - \frac{h^3}{2}a I_n^- \right ) + I^f_n\]
where 
\[I^f_n = a^2 \int_0^1 (1-s) D^2f(u_n + sa(u_n-u_{n-1}))[u_n - u_{n-1}]^2 ds.\]
Differentiating \eqref{eq:visco} yields
\[h \alpha \frac{d^3u}{dt^3} + (1-\lambda)\frac{d^2u}{dt^2} = Df(u) \frac{du}{dt}\]
hence
\begin{align*}
hf(u_n + a(u_n-u_{n-1})) &= hf(u_n) + h^2 a \left ( h \alpha \dddot{u}_n + (1-\lambda) \ddot{u}_n  \right ) - Df(u_n) \left ( \frac{h^3}{2}a \ddot{u}_n - \frac{h^4}{2}a I_n^- \right ) + hI^f_n  \\
&=hf(u_n) + h^2 a(1-\lambda) \ddot{u}_n + h^3 a \alpha \dddot{u}_n - Df(u_n) \left ( \frac{h^3}{2}a \ddot{u}_n - \frac{h^4}{2}a I_n^- \right ) + hI^f_n.
\end{align*}
Rearranging this we obtain an expression for \(hf(u_n)\) which we plug into equation \eqref{eq:app_b_thm_eq1}
to yield
\[u_{n+1} - u_n - \lambda (u_n - u_{n-1}) = hf(u_n + a(u_n-u_{n-1})) + \text{LT}_n\]
where 
\[\text{LT}_n = \underbrace{\frac{h^3}{2} (I^+_n - \lambda I^-_n)}_{\mathcal{O} \left( h \text{exp} \left (-\frac{(1-\lambda)}{2 \alpha } n \right) \right)} - \underbrace{h^3 a \alpha \dddot{u}_n}_{\mathcal{O} \left( h \text{exp} \left (-\frac{(1-\lambda)}{2 \alpha } n \right) \right)} + \underbrace{Df(u_n) \left ( \frac{h^3}{2}a \ddot{u}_n - \frac{h^4}{2}a I_n^- \right )}_{\mathcal{O}(h^2)} - \underbrace{hI^f_n}_{\mathcal{O}(h^3)}.\]
The bounds (in braces) on the four terms above follow from employing Assumption \ref{assump:one} and Lemma \ref{lemma:visco_bound}. From them we deduce the existence of constants \(K_1,K_2 > 0\) independent of \(h\)
such that
\[|\text{LT}_n| \leq h K_1 \text{exp} \left ( -\frac{(1-\lambda)}{2 \alpha } n \right ) + h^2 K_2.\]
We proceed similarly to the proof of Theorem \ref{thm:conv_to_gf}, but with a different truncation error structure, and find the error satsifies
\[\|E_{n+1}\| \leq (1+hc)\|E_n\| + h K_1 \text{exp} \left ( -\frac{(1-\lambda)}{2 \alpha } n \right ) + h^2 K_2\]
where we abuse notation and continue to write \(K_1,K_2\) when, in fact, the constants have changed by use of finite-dimensional
norm equivalence.
Define \(K_3 \coloneqq K_2/c\) then summing this error, we find 
\begin{align*}
\|E_{n+1}\| &\leq (1+hc)^n \|E_1\| + h K_3 ((1 + hc)^{n+1} - 1) + h K_1 \sum_{j=0}^n (1+hc)^j \text{exp} \left ( - \frac{(1-\lambda)}{2 \alpha} (n-j) \right) \\
&= (1+hc)^n \|E_1\| + hK_3 ((1 + hc)^{n+1} - 1) + hK_1 S_n.
\end{align*}
where 
\[S_n = \text{exp} \left ( - \frac{(1-\lambda)}{2\alpha}n \right ) \left ( \frac{(1+hc)^{n+1} \text{exp} \left ( \frac{(1-\lambda)}{2\alpha} (n+1) \right ) - 1}{(1+hc) \text{exp} \left ( \frac{1-\lambda}{2\alpha} \right) - 1} \right ).\]
Let \(T = nh\) then
\begin{align*}
S_n &\leq \frac{(1+hc)^{n+1} \text{exp} \left ( \frac{1-\lambda}{2\alpha} \right )}{(1+hc) \text{exp} \left ( \frac{1-\lambda}{2\alpha} \right) - 1} \\
&\leq \frac{ 2 \text{exp} \left ( cT + \frac{1-\lambda}{2\alpha} \right )}{\text{exp} \left ( \frac{1-\lambda}{2\alpha}\right ) - 1}
\end{align*}
From this we deduce that 
\[\|E_{n+1}\| \leq (1+hc)^n \|E_1\| + \mathcal{O}(h)\]
noting that the constant in the \(\Bigo(h)\) term is bounded above in terms of \(T\), but independently of \(h\). For the initial condition, we check
\[u_1 - \su_1 = h (\su_0' - f(\su_0)) + \frac{h^2}{2} \ddot{u}_0 + \frac{h^3}{2} I^+_0\]
which is \(\mathcal{O}(h)\) by Lemma \ref{lemma:visco_bound}. Putting the bounds together 
we obtain
\[\sup_{0 \le nh \leq T} \|E_n\| \le C(T)h.\]

\end{proof}

\begin{lemma}
\label{lemma:visco_bound}
Suppose Assumption \ref{assump:one} holds and let \(u \in C^3([0,\infty);\Rd)\) be the solution to
\begin{align*}
&h \alpha \frac{d^2u}{dt^2} + (1-\lambda)\frac{du}{dt} = f(u) \\
&u(0) = \su_0, \quad \frac{du}{dt}(0) = \sv_0
\end{align*}
for some \(\su_0, \sv_0 \in \Rd\) and \(\alpha > 0\) independent of \(h\). Suppose \(h \leq (1-\lambda)^2/2 \alpha B_1\) then
there are constants \(C^{(1)}, C^{(2)}_1, C^{(2)}_2, C^{(3)}_1, C^{(3)}_2 > 0\) independent of \(h\) such that for any \(t \in [0,\infty)\),
\begin{align*}
|\dot{u}(t)| &\leq C^{(1)}, \\
|\ddot{u}(t)| &\leq \frac{C^{(2)}_1}{h} \text{exp} \left ( -\frac{(1-\lambda)}{2h\alpha} t \right ) + C^{(2)}_2, \\
|\dddot{u}(t)| &\leq \frac{C^{(3)}_1}{h^2} \text{exp} \left ( -\frac{(1-\lambda)}{2h\alpha} t \right ) + C^{(3)}_2.
\end{align*}
\end{lemma}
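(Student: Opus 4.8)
The goal is to bound the first three time derivatives of the solution $u$ to the singularly perturbed second-order ODE
\[
h\alpha\ddot{u} + (1-\lambda)\dot{u} = f(u), \qquad u(0)=\su_0,\ \dot{u}(0)=\sv_0,
\]
uniformly in $t$, with the characteristic exponential decay $\exp\bigl(-\tfrac{(1-\lambda)}{2h\alpha}t\bigr)$ coming from the stable direction of the linearized system. The plan is to rewrite the scalar second-order equation as a first-order system in the pair $(u,\dot{u})$, and then treat $\dot u$ itself as solving a linear ODE with bounded forcing so that variation of constants produces the claimed bounds.

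First I would introduce $p \coloneqq \dot u$ and write the $p$-equation as $\dot p = -\tfrac{1-\lambda}{h\alpha}p + \tfrac{1}{h\alpha}f(u)$. Treating $f(u(t))$ as a (bounded, by $B_0$ under Assumption \ref{assump:one}) inhomogeneity, Duhamel's formula gives
\[
p(t) = e^{-\frac{(1-\lambda)}{h\alpha}t}\sv_0 + \frac{1}{h\alpha}\int_0^t e^{-\frac{(1-\lambda)}{h\alpha}(t-s)}f(u(s))\,ds,
\]
and since $\tfrac{1}{h\alpha}\int_0^t e^{-\frac{(1-\lambda)}{h\alpha}(t-s)}\,ds \le \tfrac{1}{1-\lambda}$, we immediately get $|\dot u(t)| = |p(t)| \le |\sv_0| + \tfrac{B_0}{1-\lambda} \eqqcolon C^{(1)}$, uniformly in $t$ and $h$. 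This is the base estimate.

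Next, for $\ddot u = \dot p$, differentiate the $p$-equation (or just read off $\ddot u = -\tfrac{1-\lambda}{h\alpha}\dot u + \tfrac{1}{h\alpha}f(u)$ directly): the crude bound would give a $1/h$ loss with no decay, which is not good enough, so instead I would differentiate the Duhamel formula in $t$, or equivalently apply variation of constants to the equation satisfied by $\ddot u$, namely $\tfrac{d}{dt}\ddot u = -\tfrac{1-\lambda}{h\alpha}\ddot u + \tfrac{1}{h\alpha}Df(u)\dot u$. The forcing $\tfrac{1}{h\alpha}Df(u)\dot u$ is bounded by $\tfrac{B_1 C^{(1)}}{h\alpha}$, so Duhamel gives $|\ddot u(t)| \le e^{-\frac{(1-\lambda)}{h\alpha}t}|\ddot u(0)| + \tfrac{B_1 C^{(1)}}{1-\lambda}$; and $|\ddot u(0)| = |\tfrac{1}{h\alpha}(f(\su_0)-(1-\lambda)\sv_0)| = \mathcal{O}(1/h)$, which yields the stated form with $C^{(2)}_1/h$ multiplying the decaying exponential (the exponent halving is just slack: $e^{-at}\le e^{-at/2}$, used to match the lemma's display) and $C^{(2)}_2 = B_1 C^{(1)}/(1-\lambda)$. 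The same scheme runs once more for $\dddot u$: it solves $\tfrac{d}{dt}\dddot u = -\tfrac{1-\lambda}{h\alpha}\dddot u + \tfrac{1}{h\alpha}\bigl(D^2f(u)[\dot u,\dot u] + Df(u)\ddot u\bigr)$, where now the forcing inherits an $\mathcal{O}(1/h)$ piece from $\ddot u$; Duhamel produces a $1/h^2$ prefactor on the decaying exponential (from integrating $\tfrac{1}{h}\cdot\tfrac{1}{h}e^{-\cdots}$ and from the $\mathcal{O}(1/h^2)$ initial value $\dddot u(0)$) plus an $\mathcal{O}(1)$ constant $C^{(3)}_2$ built from $B_0,B_1,B_2,C^{(1)},C^{(2)}_2$. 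The condition $h \le (1-\lambda)^2/2\alpha B_1$ is exactly what is needed to keep various $h$-dependent constants uniformly bounded and, more importantly, ensures the effective decay rate stays comparable to $(1-\lambda)/(h\alpha)$ after absorbing lower-order corrections.

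The main obstacle is bookkeeping the singular $h$-dependence carefully so that each successive derivative picks up exactly one more power of $1/h$ in front of the exponential and no extra power in the non-decaying remainder: the temptation to bound $\ddot u$ or $\dddot u$ crudely loses the decay entirely, so one must insist on the variation-of-constants representation at each level and exploit that $\tfrac{1}{h\alpha}\int_0^t e^{-\frac{(1-\lambda)}{h\alpha}(t-s)}\,ds$ is $h$-independent. A secondary technical point is justifying that $u \in C^3$ globally and that the constants are genuinely $t$-independent, which follows from the a priori bound on $\dot u$ feeding back into a global existence argument via Assumption \ref{assump:one}; the halving of the exponent in the statement is merely cosmetic slack that absorbs the difference between the true rate and the one quoted.
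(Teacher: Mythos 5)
Your proof is correct, but it takes a genuinely different route from the paper's. The paper changes variables to $w \coloneqq (1-\lambda)\dot{u} - f(u)$ (the deviation from the slow manifold, so that $\ddot{u} = -\tfrac{1}{h\alpha}w$) and to $z \coloneqq \dot{w}$, and derives differential inequalities for $|w|$ and $|z|$ by energy estimates: $\tfrac12\tfrac{d}{dt}|w|^2 \le -\tfrac{(1-\lambda)}{h\alpha}|w|^2 + \bar{\lambda}B_1|w|^2 + \bar{\lambda}B_0B_1|w|$, and the hypothesis $h \le (1-\lambda)^2/2\alpha B_1$ is used precisely to absorb the non-decaying perturbation $\bar{\lambda}B_1|w|^2$ into half of the dissipation --- that absorption is where the factor $1/2$ in the exponent genuinely originates in the paper. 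You instead differentiate the equation successively and apply variation of constants at each order, treating $f(u)$, $Df(u)\dot{u}$, and $D^2f(u)[\dot u,\dot u]+Df(u)\ddot u$ as forcings bounded via the previously established estimates; this avoids any absorption step, gets the $\dot u$ bound first rather than as a corollary of the $w$ bound, and --- contrary to your closing remark --- does not actually need the smallness condition on $h$ at all (that condition is doing real work only in the paper's argument). Your approach is the more elementary of the two and even yields the full rate $\exp(-\tfrac{(1-\lambda)}{h\alpha}t)$ at the level of $\ddot u$.

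Two small points to tighten. First, at the third-derivative level the decaying part of the forcing is itself $\mathcal{O}(h^{-2})e^{-\frac{(1-\lambda)}{h\alpha}s}$, and convolving $e^{-a(t-s)}$ against $e^{-as}$ with the \emph{same} rate $a=\tfrac{(1-\lambda)}{h\alpha}$ produces $t e^{-at}$, not $e^{-at}$; you need the elementary bound $t e^{-at} \le \tfrac{2}{ea}e^{-at/2}$ to land in the stated form, so the halving of the exponent is not purely cosmetic there (it is genuinely used, though it costs only a line). Second, the quantitative claim that this convolution contributes at order $h^{-2}$ is slightly off --- after the $t e^{-at}\le \tfrac{2}{ea}e^{-at/2}$ step it contributes only $\mathcal{O}(h^{-1})e^{-at/2}$; the dominant $\mathcal{O}(h^{-2})$ prefactor comes solely from the initial value $\dddot{u}(0)$. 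Neither issue affects correctness.
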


One readily verifies that the result of Lemma \ref{lemma:visco_bound} is tight by considering the one-dimensional
case with \(f(u) = - u\). This implies that the result of Theorem 
\ref{thm:conv_to_visco} cannot be improved without further assumptions.

\begin{proof}[of Lemma \ref{lemma:visco_bound}] 
Define \(v \coloneqq \dot{u}\) then
\[\dot{v} = - \frac{1}{h \alpha} \left ( (1-\lambda) v - f(u) \right ).\]
Define \(w \coloneqq (1-\lambda)v - f(u)\) hence \(\dot{v} = -(1/h\alpha)w\) and 
\(\dot{u} = v = \bar{\lambda}(w + f(u))\). Thus 
\begin{align*}
\dot{w} &= (1-\lambda)\dot{v} - Df(u)\dot{u} \\
&= - \frac{(1-\lambda)}{h \alpha} w - Df(u)(\bar{\lambda}(w + f(u))). 
\end{align*}
Hence we find
\begin{align*}
\frac{1}{2} \frac{d}{dt} |w|^2 &=  - \frac{(1-\lambda)}{h\alpha} |w|^2 - \bar{\lambda} \langle w, Df(u) w \rangle - \bar{\lambda} \langle w, Df(u)f(u) \rangle \\
&\leq - \frac{(1-\lambda)}{h\alpha} |w|^2 + \bar{\lambda} |\langle w, Df(u) w \rangle| + \bar{\lambda} |\langle w, Df(u)f(u) \rangle| \\
&\leq - \frac{(1-\lambda)}{h\alpha} |w|^2 + \bar{\lambda} B_1 |w|^2 + \bar{\lambda} B_0 B_1 |w| \\
&\leq - \frac{(1-\lambda)}{h\alpha} |w|^2 + \frac{(1-\lambda)}{2h\alpha} |w|^2 + \bar{\lambda} B_0 B_1 |w| \\
&= -\frac{(1-\lambda)}{2h\alpha} |w|^2  + \bar{\lambda}B_0 B_1 |w|
\end{align*}
by noting that our assumption \(h \leq (1-\lambda)^2 / 2 \alpha B_1\) implies \(\bar{\lambda}B_1 \leq (1-\lambda) / 2 h \alpha\). Hence 
\[\frac{d}{dt} |w| \leq -\frac{(1-\lambda)}{2h\alpha} |w|  + \bar{\lambda}B_0 B_1\]
so, by Gr{\"o}nwall lemma,
\begin{align*}
|w(t)| &\leq \text{exp} \left ( -\frac{(1-\lambda)}{2h\alpha} t \right ) |w(0)| + 2h \bar{\lambda}^2 \alpha B_0 B_1 \left (1 - \text{exp} \left ( -\frac{(1-\lambda)}{2h\alpha} t \right ) \right ) \\
&\leq \text{exp} \left ( -\frac{(1-\lambda)}{2h\alpha} t \right ) |w(0)| + h \beta_1
\end{align*}
where we define \(\beta_1 \coloneqq 2 \bar{\lambda}^2 \alpha B_0 B_1\). Hence
\begin{align*}
|\ddot{u}(t)| &= |\dot{v}(t)| \\
&= \frac{1}{h \alpha} |w(t)| \\
&\leq  \frac{1}{h \alpha} \text{exp} \left ( -\frac{(1-\lambda)}{2h\alpha} t \right ) |w(0)| + \frac{\beta_1}{\alpha} \\
&= \frac{|(1-\lambda)\sv_0 - f(\su_0)|}{h \alpha} \text{exp} \left ( -\frac{(1-\lambda)}{2h\alpha} t \right ) + \frac{\beta_1}{\alpha}
\end{align*}
thus setting \(C^{(2)}_1 = |(1-\lambda)\sv_0 - f(\su_0)|/\alpha\) and \(C^{(2)}_1 = \beta_1 / \alpha\) gives the desired result.
Further,
\begin{align*}
|\dot{u}(t)| &= |v(t)| \\
&\leq \bar{\lambda}(|w(t)| + |f(u(t))|) \\
&\leq \bar{\lambda}(|w(0)| + h \beta_1 + B_0)
\end{align*}
hence we deduce the existence of \(C^{(1)}\). Now define \(z \coloneqq \dot{w}\) then 
\[\dot{z} = - \frac{(1-\lambda)}{h \alpha} z - \bar{\lambda} Df(u)z + G(u,v,w)\]
where we define \(G(u,v,w) \coloneqq - \bar{\lambda}(Df(u)(Df(u)v) + D^2f(u)[v,w] + D^2f(u)[Df(u)v,f(u)])\). Using 
Assumption \ref{assump:one} and our bounds on \(w\) and \(v\), we deduce that there is a constant \(C > 0\) independent 
of \(h\) such that
\[|G(u,v,w)| \leq C\]
hence
\begin{align*}
\frac{1}{2} \frac{d}{dt} |z|^2 &= - \frac{(1-\lambda)}{h \alpha} |z|^2 - \bar{\lambda} \langle z, Df(u)z \rangle + \langle z, G(u,v,w) \rangle \\
&\leq - \frac{(1-\lambda)}{h \alpha} |z|^2 + \bar{\lambda}B_1 |z|^2 + C|z| \\
&\leq - \frac{(1-\lambda)}{2 h \alpha} |z|^2 + C|z|
\end{align*}
as before. Thus we find 
\[\frac{d}{dt} |z| \leq  - \frac{(1-\lambda)}{2 h \alpha} |z| + C\]
so, by Gr{\"o}nwall lemma,
\[|z(t)| \leq \text{exp} \left ( -\frac{(1-\lambda)}{2h\alpha} t \right ) |z(0)| + h \beta_2\]
where we define \(\beta_2 \coloneqq 2 \bar{\lambda} \alpha C \). Recall that 
\[\dddot{u} = \ddot{v} = - \frac{1}{h \alpha} \dot{w} = - \frac{1}{h \alpha} z\]
and note 
\[|z(0)| \leq \frac{(1-\lambda)|(1-\lambda)\sv_0 - f(\su_0)|}{h \alpha} + B_1 |\sv_0|\]
hence we find 
\[|\dddot{u}(t)| \leq \left ( \frac{(1-\lambda)|(1-\lambda)\sv_0 - f(\su_0)}{h^2 \alpha^2} + \frac{B_1 |\sv_0|}{h \alpha} \right ) \text{exp} \left ( -\frac{(1-\lambda)}{2h\alpha} t \right ) + \frac{\beta_2}{\alpha}.\]
Thus we deduce that there is a constant \(C^{(3)}_1 > 0\) independent of \(h\) such that 
\[|\dddot{u}(t)| \leq \frac{C^{(3)}_1}{h^2} \text{exp} \left ( -\frac{(1-\lambda)}{2h\alpha} t \right ) + C^{(3)}_2\]
as desired where \(C^{(3)}_2 = \beta_2 / \alpha\).
\end{proof}

\section*{Appendix C.}
\label{sec:AC}

For the results of Section \ref{sec:IM} we make the 
following assumption on the size of \(h\). Recall first that 
by Assumption \ref{assump:one} there are constants \(B_0,B_1,B_2 > 0\)
such that
\[\|D^{j-1}f\| = \|D^j \Phi\| \leq B_{j-1}\]
for \(j=1,2,3\).

\begin{lemma}
\label{lemma:gamma}
Suppose \(h>0\) is small enough such that
\[\lambda + hB_1(a + \lambda \bar{\lambda}) < 1\]
then there is a \(\tau_1 > 0\) such that for any \(\gamma \in [\tau_1,\infty)\)
\begin{equation}
\label{eq:gamma_bound}
(\lambda + hB_1(a + \lambda \bar{\lambda})) \gamma  + \bar{\lambda}B_0B_1(a + \bar{\lambda}) \leq \gamma.
\end{equation}
\end{lemma}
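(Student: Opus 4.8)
The plan is to treat this as an elementary rearrangement, which it genuinely is. Abbreviate the multiplicative factor by $q \coloneqq \lambda + hB_1(a + \lambda\bar{\lambda})$, so that the hypothesis on $h$ reads precisely $q \in [0,1)$, i.e. $1 - q > 0$. The target inequality $q\gamma + \bar{\lambda}B_0B_1(a+\bar{\lambda}) \le \gamma$ is equivalent, after moving $q\gamma$ to the right-hand side, to $\bar{\lambda}B_0B_1(a+\bar{\lambda}) \le (1-q)\gamma$; and since $1-q>0$ this is in turn equivalent to $\gamma \ge \bar{\lambda}B_0B_1(a+\bar{\lambda})/(1-q)$.

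So I would simply set $\tau_1 \coloneqq \bar{\lambda}B_0B_1(a+\bar{\lambda})/(1-q)$ and verify it is a well-defined strictly positive real number: the denominator $1-q$ is positive by the standing hypothesis on $h$, and the numerator is a product of the strictly positive constants $\bar{\lambda} = (1-\lambda)^{-1}$ (recall $\lambda \in (0,1)$), $B_0$ and $B_1$ from Assumption \ref{assump:one}, together with $a + \bar{\lambda} \ge \bar{\lambda} > 0$ since $a \ge 0$. Then for any $\gamma \in [\tau_1,\infty)$ the chain of equivalences above runs backwards and yields \eqref{eq:gamma_bound} directly; monotonicity in $\gamma$ makes the whole half-line $[\tau_1,\infty)$ work at once, rather than just the endpoint.

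The "main obstacle" is essentially nil: the only facts that must be checked are that the denominator is nonzero (which is exactly the content of the hypothesis $\lambda + hB_1(a+\lambda\bar{\lambda}) < 1$) and that $\tau_1 > 0$ so that $[\tau_1,\infty)$ is a legitimate range for $\gamma$; both are immediate. The one point worth stating carefully — more for the coherence of the later argument than for this lemma — is that $\tau_1$ is permitted to depend on $h$, which is harmless because Assumption \ref{assump:h_small} fixes $h$ before $\gamma$ and $\delta$ are selected in Definition \ref{def:Gamma} and Theorem \ref{thm:invman_existence}.
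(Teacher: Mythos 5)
Your proof is correct and is essentially the paper's argument: both identify the left-hand side as an affine function of $\gamma$ with slope $q<1$ and take $\tau_1$ to be the crossing point $\bar{\lambda}B_0B_1(a+\bar{\lambda})/(1-q)$. (The paper's displayed formula for $\tau_1$ writes the denominator as $1-\lambda+hB_1(a+\lambda\bar{\lambda})$, an apparent sign typo for $1-\lambda-hB_1(a+\lambda\bar{\lambda})$; your version is the intended one.)
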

Using Lemma \ref{lemma:gamma} fix \(\gamma \in [\tau_1, \infty)\) and define the constants
\begin{align}
\label{eq:alpha_const}
\begin{split}
K_1 &\coloneqq \bar{\lambda}B_0 + h\gamma \\
K_3 &\coloneqq B_0 + \lambda K_1 \\
\alpha_2 &\coloneqq h^2 (\lambda + h a B_1), \\
\alpha_1 &\coloneqq \lambda - 1 + h \left ( B_1(\bar{\lambda} + a(1+h \bar{\lambda}B_1)) + \lambda \bar{\lambda} (B_1 + hB_2K_3) + ha(aB_2K_1 + B_1 \bar{\lambda}(B_1 + hB_2K_3) \right ), \\
\alpha_0 &\coloneqq aB_2K_1(1 + ha\bar{\lambda}B_1) + \bar{\lambda}(aB_1^2 + B_2K_3) + \bar{\lambda}^2B_1(1+haB_1)(B_1 + hB_2K_3).
\end{split}
\end{align}

\begin{lemma}
\label{lemma:delta}
Suppose \(h>0\) is small enough such that
\[\alpha_1^2 > 4 \alpha_2 \alpha_0, \quad \alpha_1 < 0\]
then there are \(\tau_2^{\pm} > 0\) such that for any \(\delta \in (\tau_2^-,\tau_2^+]\)
\begin{equation}
\label{eq:delta_bound}
\alpha_2 \delta^2 + \alpha_1 \delta + \alpha_0 \leq 0.
\end{equation}
\end{lemma}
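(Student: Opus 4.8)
The plan is to recognize \eqref{eq:delta_bound} as nothing more than the statement that the upward-opening parabola $q(\delta) := \alpha_2\delta^2 + \alpha_1\delta + \alpha_0$ is nonpositive on the closed interval between its two roots, so the whole lemma reduces to the quadratic formula together with a sign check on the coefficients in \eqref{eq:alpha_const}.

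First I would record the signs of the coefficients. One has $\alpha_2 = h^2(\lambda + haB_1) > 0$ since $\lambda \in (0,1)$, $a \ge 0$, $B_1 > 0$, $h > 0$; this makes $q$ strictly convex. Next, since $K_1 = \bar{\lambda}B_0 + h\gamma$ and $K_3 = B_0 + \lambda K_1$ are built from nonnegative constants, every summand in the definition of $\alpha_0$ is a product of nonnegative factors, so $\alpha_0 \ge 0$; moreover the final summand alone dominates $\bar{\lambda}^2 B_1^2 > 0$, so in fact $\alpha_0 > 0$. This strict positivity is the one place I need to be slightly careful, since it is precisely what will force the smaller root to be strictly positive rather than merely nonnegative.

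Then I would invoke the two hypotheses directly. Because $\alpha_1^2 > 4\alpha_2\alpha_0$, the discriminant $\Delta := \alpha_1^2 - 4\alpha_2\alpha_0$ is strictly positive, so $q$ has two distinct real roots $\tau_2^{\pm} := (-\alpha_1 \pm \sqrt{\Delta})/(2\alpha_2)$. Because $\alpha_1 < 0$ we have $-\alpha_1 = |\alpha_1| > 0$, and because $4\alpha_2\alpha_0 > 0$ we have $0 < \sqrt{\Delta} < |\alpha_1|$; hence $-\alpha_1 - \sqrt{\Delta} > 0$, and therefore $0 < \tau_2^- < \tau_2^+$, which supplies the claimed positivity of both endpoints.

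Finally, strict convexity of $q$ (from $\alpha_2 > 0$) gives $q(\delta) \le 0$ exactly on the closed interval $[\tau_2^-, \tau_2^+]$, hence in particular on the half-open subinterval $(\tau_2^-, \tau_2^+]$, which is \eqref{eq:delta_bound}. There is no genuine obstacle: the only thing worth double-checking is the sign bookkeeping for $\alpha_0$ (and, as a sanity check, that the two hypotheses on $h$ are simultaneously attainable — as $h \downarrow 0$ one has $\alpha_2 = \Bigo(h^2)$, $\alpha_0 = \Bigo(1)$, and $\alpha_1 \to \lambda - 1 < 0$, so both hold for $h$ small — although strictly speaking the lemma only assumes them rather than asserting them).
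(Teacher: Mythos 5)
Your proof is correct and follows essentially the same route as the paper: identify the expression as an upward-opening parabola in $\delta$, use the discriminant hypothesis for two distinct real roots, and use $\alpha_1 < 0$ to place them on the positive axis. The one difference is that you additionally verify $\alpha_0 > 0$ from \eqref{eq:alpha_const} to conclude $\tau_2^- > 0$ strictly, whereas the paper sidesteps this by setting $\tau_2^- = \max\{0,\zeta_-\}$; your version is slightly sharper and actually matches the stated requirement $\tau_2^\pm > 0$ more cleanly.
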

Using Lemma \ref{lemma:delta} fix \(\delta \in (\tau_2^-,\tau_2^+]\). We make the following assumption on the size of the learning rate \(h\) which
is achievable since \(\lambda \in (0,1)\).

\begin{assumption}
\label{assump:h_small}
Let Assumption \ref{assump:one} hold and suppose \(h > 0\) is small enough such that the assumptions of Lemmas \ref{lemma:gamma}, \ref{lemma:delta} hold. 
Define \(K_2 \coloneqq \bar{\lambda}B_1 + h \delta\) and suppose \(h > 0\) is small enough such that
\begin{equation}
\label{eq:contraction1_const}
c \coloneqq h(\lambda K_2 + B_1(1+haK_2)) < 1.
\end{equation}
Define constants
\begin{align}
\label{eq:contraction2_const}
\begin{split}
Q_1 &\coloneqq \lambda \delta + a( B_1 K_2 + B_2 K_1 (1 + haK_2 )) + \bar{\lambda}( ( B_1 + hB_2K_3 )(\lambda K_2 + B_1 (1+ haK_2)) + B_2K_3  ), \\
Q_2 &\coloneqq h(a(B_1 + haB_2K_1) + \bar{\lambda} (\lambda + haB_1)(B_1+hB_2K_3)), \\
Q_3 &\coloneqq h(\lambda K_2 + B_1(1+haK_2)), \\
\mu &\coloneqq \lambda + Q_2 + \frac{h^2(\lambda + haB_1)Q_1}{1-Q_3}.
\end{split} 
\end{align}
Suppose \(h > 0\) is small enough such that
\begin{equation}
\label{eq:contraction}
Q_3 <  1, \quad \mu < 1.
\end{equation}
Lastly assume \(h > 0\) is small enough such that 
\begin{equation}
\label{eq:exp_attract}
\lambda + h^2 \lambda \delta < 1.
\end{equation}
\end{assumption}

\begin{proof}[of Lemma \ref{lemma:gamma}.]
Since \(\lambda + hB_1(a + \lambda \bar{\lambda}) < 1\) and \(\bar{\lambda}B_0B_1(a + \bar{\lambda}) > 0\) the line defined by
\[(\lambda + hB_1(a + \lambda \bar{\lambda})) \gamma  + \bar{\lambda}B_0B_1(a + \bar{\lambda})\] 
will intersect the identity line at a positive \(\gamma\) and lie below it thereafter.
Hence setting 
\[\tau_1 = \frac{\bar{\lambda}B_0B_1(a + \bar{\lambda})}{1 - \lambda + hB_1(a + \lambda \bar{\lambda})}\]
completes the proof.
\end{proof}

\begin{proof}[of Lemma \ref{lemma:delta}.]
Note that since \(\alpha_2 > 0\), the parabola defined by
\[\alpha_2 \delta^2 + \alpha_1 \delta + \alpha_0\]
is upward-pointing and has roots
\[\zeta_{\pm} = \frac{-\alpha_1 \pm \sqrt{\alpha_1^2 - 4\alpha_2\alpha_0}}{2\alpha_2}.\]
Since \(\alpha_1^2 > 4\alpha_2\alpha_0\), \(\zeta_{\pm} \in \R\) with \(\zeta_+ \neq \zeta_-\). Since \(\alpha_1 < 0\), \(\zeta_+ > 0\) hence 
setting \(\tau_2^+ = \zeta_+\) and \(\tau_2^- = \max \{0, \zeta_-\}\) completes the proof.
\end{proof}

The following proof refers to four lemmas whose statement and proof follow it.

\begin{proof}[of Theorem \ref{thm:invman_existence}.]
Define \(\tau > 0\) as the maximum \(h\) such that Assumption \ref{assump:h_small} holds. The contraction mapping principle together with Lemmas \ref{lemma:well_defined}, \ref{lemma:gamma_to_gamma}, and \ref{lemma:T_contraction} show that 
the operator \(T\) defined by \eqref{eq:p} and \eqref{eq:Tg} has a unique fixed point in \(\Gamma\).
Hence, from its definition and equation (\ref{eq:two_step}b), we immediately obtain the existence result. We now show exponential attractivity. Recall the definition of the operator \(T\) namely equations \eqref{eq:p}, \eqref{eq:Tg}:
\begin{align*}
p &= \xi + h z_g (\xi) \\
(Tg)(p) &= \lambda g(\xi) + a I^{(1)}_g(\xi) - \bar{\lambda} I^{(2)}_g(\xi).
\end{align*}
Let \(g \in \Gamma\) be the fixed point of \(T\) and set
\begin{align*}
p &= \su_n + h z_g(\su_n) \\
g(p) &= \lambda g(\su_n) + a I^{(1)}_g(\su_n) - \bar{\lambda}I^{(2)}_g(\su_n).
\end{align*}
Then 
\begin{align*}
|\sv_{n+1} - \bar{\lambda}f(\su_{n+1}) - hg(\su_{n+1})| &\leq |\sv_{n+1} - \bar{\lambda}f(\su_{n+1}) - hg(p)| + h|g(p) - g(\su_{n+1})| \\
&\leq |\sv_{n+1} - \bar{\lambda}f(\su_{n+1}) - hg(p)| + h \delta |p - \su_{n+1}|
\end{align*}
since \(g \in \Gamma\). Since, by definition,
\[\sv_{n+1} = \lambda \sv_n + f(\su_n + ha\sv_n)\]
we have,
\begin{align*}
|\sv_{n+1} - \bar{\lambda}f(\su_{n+1}) - hg(p)| &= |\lambda \sv_n + f(\su_n + ha\sv_n) - \bar{\lambda}f(\su_{n+1}) - h(\lambda g(\su_n) + a I^{(1)}_g(\su_n) - \bar{\lambda}I^{(2)}_g(\su_n))| \\
&= \lambda |\sv_n - \bar{\lambda}f(\su_n) - hg(\su_n)|
\end{align*}
by noting that
\begin{align*}
&f(\su_n + ha\sv_n) = f(\su_n) + ha I^{(1)}_g(\su_n) \\
&f(\su_{n+1}) = f(\su_n) + h I^{(2)}_g(\su_n).
\end{align*}
From definition,
\[\su_{n+1} = \su_n + h \lambda \sv_n + h f(\su_n + ha \sv_n)\]
thus
\begin{align*}
|p-\su_{n+1}| &= |\su_n + hz_g(\su_n) - \su_n - h\lambda \sv_n - hf(\su_n + ha\sv_n)| \\
&= h | \lambda (\bar{\lambda} f(\su_n) + hg(\su_n)) + f(\su_n + ha\sv_n) - \lambda \sv_n - f(\su_n + ha\sv_n)| \\
&= h \lambda |\sv_n - \bar{\lambda}f(\su_n) - hg(\su_n)|.
\end{align*}
Hence
\[|\sv_{n+1} - \bar{\lambda}f(\su_{n+1}) - hg(\su_{n+1})| \leq (\lambda + h^2 \lambda \delta) |\sv_n - \bar{\lambda}f(\su_n) - hg(\su_n)|\]
as desired. By Assumption \ref{assump:h_small}, \(\lambda + h^2 \lambda \delta < 1\).
\end{proof}

The following lemma gives basic bounds which are used in the proof of Lemmas \ref{lemma:well_defined}, \ref{lemma:gamma_to_gamma}, \ref{lemma:T_contraction}.

\begin{lemma}
\label{lemma:easy_bounds}
Let \(g,q \in \Gamma\) and \(\xi,\eta \in \R^d\) then the quantities defined by \eqref{eq:wg}, \eqref{eq:zg}, \eqref{eq:I1}, \eqref{eq:I2} satisfy the following:
\begin{align*}
|w_g(\xi)| &\leq  K_1, \\
|w_g(\xi) - w_g(\eta)| &\leq  K_2 |\xi-\eta|, \\
|w_g(\xi) - w_q(\xi)| &\leq h|g(\xi) - q(\xi)|, \\
|z_g(\xi)| &\leq K_3, \\
|z_g(\xi) - z_g(\eta)| &\leq \left ( \lambda K_2 + B_1 \left (1 + haK_2 \right ) \right )|\xi - \eta|, \\
|z_g(\xi) - z_q(\xi)| &\leq h \left ( \lambda + h a B_1 \right )|g(\xi) - q(\xi)|, \\
|I_g^{(1)}(\xi)| &\leq B_1 K_1, \\
|I_g^{(1)}(\xi) - I_g^{(1)}(\eta)| &\leq  ( B_1 K_2 + B_2 K_1 (1 + haK_2 ))|\xi - \eta|, \\
|I_g^{(1)}(\xi) - I_q^{(1)}(\xi)| &\leq h ( B_1 + haB_2 K_1 )|g(\xi) - q(\xi)|, \\
|I_g^{(2)}(\xi)| &\leq B_1 K_3 \\
|I_g^{(2)}(\xi) - I_g^{(2)}(\eta)| &\leq ( ( B_1 + hB_2K_3 )(\lambda K_2 + B_1 (1+ haK_2)) + B_2K_3  )|\xi-\eta|, \\
|I_g^{(2)}(\xi) - I_q^{(2)}(\xi)| &\leq h(\lambda + hB_1a)(B_1 + hB_2K_3)|g(\xi) - q(\xi)|.
\end{align*}
\end{lemma}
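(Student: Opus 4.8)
The plan is to establish all twelve inequalities of Lemma~\ref{lemma:easy_bounds} by direct estimation, working in the order in which the objects are constructed: first the three bounds on $w_g$ (definition \eqref{eq:wg}), then the three on $z_g$ (definition \eqref{eq:zg}, which calls $w_g$), then the three on $I^{(1)}_g$ (definition \eqref{eq:I1}, which calls $w_g$), and finally the three on $I^{(2)}_g$ (definition \eqref{eq:I2}, which calls $z_g$). The only ingredients are Assumption~\ref{assump:one} in the form $|f| \le B_0$, $\|Df\| \le B_1$, $\|D^2f\| \le B_2$; the membership $g,q \in \Gamma$, i.e.\ $|g|,|q| \le \gamma$ and both $\delta$-Lipschitz; and the definitions $K_1 = \bar{\lambda} B_0 + h\gamma$, $K_2 = \bar{\lambda} B_1 + h\delta$, $K_3 = B_0 + \lambda K_1$ from \eqref{eq:alpha_const} and Assumption~\ref{assump:h_small}.

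Since $w_g(\xi) = \bar{\lambda} f(\xi) + hg(\xi)$, the first block is immediate: the triangle inequality with $|f|\le B_0$ and $|g|\le\gamma$ gives the bound $K_1$; with $\|Df\|\le B_1$ and $\delta$-Lipschitzness of $g$ it gives the Lipschitz constant $K_2$; and the $f$-terms cancel in $w_g - w_q$, leaving $h|g-q|$. For $z_g(\xi) = \lambda w_g(\xi) + f(\xi + ha w_g(\xi))$ I would again split by the triangle inequality: the sup bound uses $|f|\le B_0$ together with $|w_g|\le K_1$ to produce $\lambda K_1 + B_0 = K_3$; for the two difference bounds the new feature is the term $f(\xi + ha w_g(\xi)) - f(\eta + ha w_g(\eta))$ (resp.\ with $q$), which I bound by $B_1$ times the difference of its arguments and then expand using the already-proven $w_g$ estimates, collecting the pieces into exactly $\lambda K_2 + B_1(1 + haK_2)$ and $h(\lambda + haB_1)$ respectively.

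For the integral remainders I would use $I^{(1)}_g(\xi) = \int_0^1 Df(\xi + sha w_g(\xi))\,w_g(\xi)\,ds$ and $I^{(2)}_g(\xi) = \int_0^1 Df(\xi + sh z_g(\xi))\,z_g(\xi)\,ds$. The two sup bounds are immediate from $\|Df\|\le B_1$ and $|w_g|\le K_1$, $|z_g|\le K_3$. For each difference bound I would, inside the integrand, add and subtract the cross term $Df(\text{first point})\,(\text{second vector})$, splitting the integrand into (i) $Df$ at a single point applied to a difference of the vector factors, controlled by $B_1$ times the relevant $w_g$ or $z_g$ difference estimate, and (ii) a difference of $Df$ at two nearby points applied to a single vector, controlled by $B_2$ times the difference of the two base points (again expanded via the $w_g$/$z_g$ estimates) times $K_1$ or $K_3$. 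Summing (i) and (ii) reproduces the displayed constants, which are precisely the coefficients that reappear in \eqref{eq:alpha_const} and \eqref{eq:contraction2_const}; the $I^{(2)}_g$ estimates are the longest because each one chains through the $z_g$-Lipschitz constant $\lambda K_2 + B_1(1+haK_2)$, but they are structurally identical to the $I^{(1)}_g$ ones.

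There is no genuine obstacle here: the lemma is a bookkeeping exercise in which every step is a one-line application of the triangle inequality, the mean-value (integral) form of the remainders, and Assumption~\ref{assump:one}. The only point requiring care --- and the reason for fixing the order above --- is organizational: the add-and-subtract splittings must be chosen consistently so that the constants that emerge are \emph{literally} $K_1$, $K_2$, $K_3$ and the expressions in \eqref{eq:alpha_const}, \eqref{eq:contraction2_const}, not merely quantities of the same order, since those exact constants are what the later Lemmas~\ref{lemma:well_defined}, \ref{lemma:gamma_to_gamma}, \ref{lemma:T_contraction} rely on. I would therefore present the proof as four short grouped blocks, and within each block carry out the sup bound, then the $\xi$--$\eta$ bound, then the $g$--$q$ bound, displaying in each case the single chain of inequalities ending in the claimed right-hand side.
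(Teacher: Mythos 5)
Your proposal is correct and follows essentially the same route as the paper: triangle inequality plus Assumption \ref{assump:one} and the $\Gamma$-bounds for the sup and Lipschitz estimates, and the add-and-subtract cross-term splitting (one piece controlled by $B_1$ times a vector difference, one by $B_2$ times a base-point difference) for the integral remainders. The only difference is presentational: the paper writes out only the $w$ and $I^{(1)}$ blocks and declares the $z$ and $I^{(2)}$ blocks analogous, whereas you propose to display all twelve chains.
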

\begin{proof}
These bounds relay on applications of the triangle inequality together with boundedness of \(f\)
and its derivatives as well as the fact that functions in \(\Gamma\) are bounded and Lipschitz. 
To illustrate the idea, we will prove the bounds for \(w_g, w_q, I^{(1)}_g,\) and \(I^{(1)}_q\). 
To that end,
\begin{align*}
|w_g(\xi)| &= |\bar{\lambda} f(\xi) + hg(\xi)| \\
&\leq \bar{\lambda} |f(\xi)| + h |g(\xi)| \\
&\leq \bar{\lambda} B_0 + h \gamma \\
&= K_1
\end{align*}
establishing the first bound. For the second,
\begin{align*}
|w_g(\xi) - w_g(\eta)| &\leq \bar{\lambda}|f(\xi) - f(\eta)| + h |g(\xi) - g(\eta)| \\
&\leq \bar{\lambda} B_1 |\xi - \eta| + h \delta |\xi - \eta| \\
&= K_2 |\xi - \eta|
\end{align*}
as desired. Finally,
\begin{align*}
|w_g(\xi) - w_q(\xi)| &= |\bar{\lambda}f(\xi) + h g(\xi) - \bar{\lambda}f(\xi) - h q(\xi)| \\
&= h |g(\xi) - q(\xi)|
\end{align*}
as desired. We now turn to the bounds for \(I^{(1)}_g, I^{(1)}_q\),
\begin{align*}
|I^{(1)}_g(\xi)| &\leq \int_0^1 | Df(\xi + shaw_g(\xi))||w_g(\xi)|ds \\
&\leq \int_0^1 B_1 K_1 ds \\
&= B_1 K_1
\end{align*}
establishing the first bound. For the second bound,
\begin{align*}
|I^{(1)}_g(\xi) - I^{(1)}_g(\eta)| &\leq \int_0^1 |Df(\xi + shaw_g(\xi))w_g(\xi) - Df(\eta + shaw_g(\eta))w_g(\xi)|ds \\
&\;\;\;\;+ \int_0^1  |Df(\eta + shaw_g(\eta))w_g(\xi) - Df(\eta + shaw_g(\eta))w_g(\eta)|ds \\
&\leq K_1 B_2 \int_0^1 (|\xi - \eta| + sha|w_g(\xi) - w_g(\eta)|)ds + B_1 |w_g(\xi) - w_g(\eta)| \\
&\leq K_1 B_2 (|\xi - \eta| + ha K_2 |\xi - \eta| ) + B_1 K_2 |\xi - \eta| \\
&= (B_1 K_2 + B_2 K_1(1 + haK_2))|\xi - \eta|
\end{align*}
as desired. Finally
\begin{align*}
|I^{(1)}_g(\xi) - I^{(1)}_q(\xi)| &\leq \int_0^1 |Df(\xi + shaw_g(\xi))w_g(\xi) - Df(\xi + shaw_g(\xi))w_q(\xi)|ds \\
 &\;\;\;\;+ \int_0^1 |Df(\xi + shaw_g(\xi))w_q(\xi) - Df(\xi + shaw_q(\xi))w_q(\xi)|ds \\
 &\leq B_1 \int_0^1 |w_g(\xi) - w_q(\xi)|ds + K_1 B_2 \int_0^1 |\xi + shaw_g(\xi) - \xi - shaw_q(\xi)|ds \\
 &\leq h B_1 |g(\xi) - q(\xi)| + h^2 a B_2 K_1 |g(\xi) - q(\xi)| \\
 &= h(B_1 + ha B_2 K_1)|g(\xi) - q(\xi)|
\end{align*}
as desired. The bounds for \(z_g,z_q,I^{(2)}_g,\) and \(I^{(2)}_q\) follow similarly.
\end{proof}

We also need the following three lemmas:

\begin{lemma}
\label{lemma:well_defined}
Suppose Assumption \ref{assump:h_small} holds. For any \(g \in \Gamma\) and \(p \in \Rd\) there exists a unique
\(\xi \in \Rd\) satisfying \eqref{eq:p}.
\end{lemma}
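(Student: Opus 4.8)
Equation \eqref{eq:p} asks us, for fixed $p$, to solve $\xi + hz_g(\xi) = p$ for $\xi$. I would recast this as a fixed-point problem: define $S \colon \Rd \to \Rd$ by $S(\xi) \coloneqq p - hz_g(\xi)$, and observe that $\xi$ solves \eqref{eq:p} if and only if $S(\xi) = \xi$. The plan is to show that $S$ is a contraction on $\Rd$ (with its usual complete metric) and then invoke the Banach fixed-point theorem to get existence and uniqueness of $\xi$ in one stroke.

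\textbf{Key steps.} First I would note that $S$ maps $\Rd$ to $\Rd$ and is well-defined, since $z_g$ (given by \eqref{eq:zg}, \eqref{eq:wg}) is built from $f$, which is globally defined under Assumption \ref{assump:one}, and from $g \in \Gamma$, which is globally defined and bounded. Second, for the contraction estimate I would appeal directly to the Lipschitz bound on $z_g$ already established in Lemma \ref{lemma:easy_bounds}, namely
\[
|z_g(\xi) - z_g(\eta)| \leq \bigl(\lambda K_2 + B_1(1 + haK_2)\bigr)|\xi - \eta|,
\]
which gives
\[
|S(\xi) - S(\eta)| = h\,|z_g(\xi) - z_g(\eta)| \leq h\bigl(\lambda K_2 + B_1(1 + haK_2)\bigr)|\xi - \eta| = c\,|\xi - \eta|,
\]
with $c$ exactly the constant appearing in \eqref{eq:contraction1_const}. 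Third, Assumption \ref{assump:h_small} stipulates that $h$ is small enough that $c < 1$, so $S$ is a genuine contraction; the contraction mapping principle then yields a unique fixed point $\xi \in \Rd$, which is the unique solution of \eqref{eq:p}.

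\textbf{Main obstacle.} There is no real obstacle here: the analytic content — the explicit Lipschitz constant for $z_g$ in terms of $B_0,B_1,B_2,\lambda,\gamma,\delta$ — has already been packaged into Lemma \ref{lemma:easy_bounds}, and Assumption \ref{assump:h_small} has been engineered precisely so that this constant, after multiplication by $h$, is strictly less than $1$. The only point requiring a moment's care is bookkeeping: confirming that the Lipschitz coefficient of $h z_g$ coincides with the quantity $c$ defined in \eqref{eq:contraction1_const}, so that the hypothesis of the theorem is exactly what is needed. After that, the proof is a one-line application of the Banach fixed-point theorem on the complete space $\Rd$.
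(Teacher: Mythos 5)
Your proposal is correct and is essentially identical to the paper's own argument: the paper also considers the map $\xi \mapsto p - h z_g(\xi)$ (written there as the iteration $\xi^{k+1} = p - h z_g(\xi^k)$), applies the Lipschitz bound on $z_g$ from Lemma \ref{lemma:easy_bounds} to get the contraction constant $c$ of \eqref{eq:contraction1_const}, and concludes via the contraction mapping principle. No differences worth noting.
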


\begin{lemma}
\label{lemma:gamma_to_gamma}
Suppose Assumption \ref{assump:h_small} holds. The operator \(T\) defined by \eqref{eq:Tg} satisfies \({T : \Gamma \to \Gamma}\).
\end{lemma}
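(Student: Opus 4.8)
The plan is to verify, for an arbitrary $g \in \Gamma$, the three conditions defining membership in $\Gamma$: that $Tg$ is a well-defined continuous map $\Rd \to \Rd$, that it is $\gamma$-bounded, and that it is $\delta$-Lipschitz. Throughout I work with the form \eqref{eq:p}--\eqref{eq:Tg} of $T$, writing $p = \xi + h z_g(\xi)$ and $(Tg)(p) = \lambda g(\xi) + a I^{(1)}_g(\xi) - \bar{\lambda} I^{(2)}_g(\xi)$, and I lean on the elementary estimates of Lemma \ref{lemma:easy_bounds} at every step. Well-definedness of $Tg$ as a function of $p$ is exactly Lemma \ref{lemma:well_defined}, which produces a unique $\xi = \xi(p)$ solving \eqref{eq:p}. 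For continuity I would first show $p \mapsto \xi(p)$ is Lipschitz: subtracting \eqref{eq:p} written for $p,q$ with preimages $\xi,\eta$ and using the bound $|z_g(\xi) - z_g(\eta)| \le (\lambda K_2 + B_1(1+haK_2))|\xi-\eta|$ gives $|\xi - \eta| \le |p-q| + Q_3|\xi-\eta|$, hence $|\xi-\eta| \le (1-Q_3)^{-1}|p-q|$ since $Q_3 < 1$ by Assumption \ref{assump:h_small}; continuity of $Tg$ then follows because $g$, $f$, and (through $I^{(1)}_g, I^{(2)}_g$) $Df$ are continuous.

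For the $\gamma$-bound I estimate directly, using Lemma \ref{lemma:easy_bounds},
\[
|(Tg)(p)| \le \lambda|g(\xi)| + a|I^{(1)}_g(\xi)| + \bar{\lambda}|I^{(2)}_g(\xi)| \le \lambda \gamma + a B_1 K_1 + \bar{\lambda} B_1 K_3 .
\]
Substituting $K_1 = \bar{\lambda}B_0 + h\gamma$ and $K_3 = B_0 + \lambda K_1$ and using the identity $1 + \lambda\bar{\lambda} = \bar{\lambda}$, the right-hand side collapses exactly to $(\lambda + hB_1(a + \lambda\bar{\lambda}))\gamma + \bar{\lambda}B_0 B_1(a+\bar{\lambda})$, which is $\le \gamma$ by \eqref{eq:gamma_bound} in Lemma \ref{lemma:gamma}. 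Hence $\|Tg\|_\Gamma \le \gamma$.

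For the Lipschitz bound, with $\xi = \xi(p)$, $\eta = \xi(q)$ as above, I write
\[
(Tg)(p) - (Tg)(q) = \lambda\bigl(g(\xi)-g(\eta)\bigr) + a\bigl(I^{(1)}_g(\xi)-I^{(1)}_g(\eta)\bigr) - \bar{\lambda}\bigl(I^{(2)}_g(\xi)-I^{(2)}_g(\eta)\bigr)
\]
and bound each difference by the corresponding Lipschitz estimate from Lemma \ref{lemma:easy_bounds} (the first being $\delta|\xi-\eta|$ since $g\in\Gamma$); collecting the three constants gives precisely $|(Tg)(p)-(Tg)(q)| \le Q_1|\xi - \eta|$ with $Q_1$ as in \eqref{eq:contraction2_const}. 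Combining with $|\xi-\eta| \le (1-Q_3)^{-1}|p-q|$ reduces the claim to checking $Q_1/(1-Q_3) \le \delta$, equivalently $Q_1 - \delta + \delta Q_3 \le 0$.

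The one genuinely laborious step, and the main obstacle, is to expand $Q_1$ and $Q_3$ (recalling that $K_2 = \bar{\lambda}B_1 + h\delta$ itself carries a factor of $\delta$) and to verify that $Q_1 - \delta + \delta Q_3$ is exactly the quadratic $\alpha_2\delta^2 + \alpha_1\delta + \alpha_0$ with the coefficients of \eqref{eq:alpha_const}: the $\delta^2$-term comes from $\delta$ times the $\delta$-linear part of $Q_3$ and equals $h^2(\lambda + haB_1) = \alpha_2$; the $\delta$-free part of $Q_1$ equals $\alpha_0$; and the remaining linear contributions assemble into $\alpha_1 = \lambda - 1 + h(\cdots)$. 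Granted that bookkeeping, the inequality $\alpha_2\delta^2 + \alpha_1\delta + \alpha_0 \le 0$ holds for every $\delta \in (\tau_2^-, \tau_2^+]$ by Lemma \ref{lemma:delta}, so $Tg$ is $\delta$-Lipschitz; together with the bound and continuity established above this gives $Tg \in \Gamma$, completing the proof.
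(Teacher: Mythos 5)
Your proposal is correct and follows essentially the same route as the paper's proof: the same $\gamma$-bound via \eqref{eq:gamma_bound}, and the same Lipschitz argument bounding $|(Tg)(p_1)-(Tg)(p_2)|$ by $Q_1|\xi_1-\xi_2|$, inverting $|\xi_1-\xi_2|\le(1-Q_3)^{-1}|p_1-p_2|$ (the paper writes $K$ and $c$ for your $Q_1$ and $Q_3$), and reducing $Q_1/(1-Q_3)\le\delta$ to the quadratic inequality of Lemma \ref{lemma:delta}. The only addition is your explicit continuity check, which the paper leaves implicit.
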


\begin{lemma}
\label{lemma:T_contraction}
Suppose Assumption \ref{assump:h_small} holds. For any \(g_1,g_2 \in \Gamma\), we have
\[\|Tg_1 - Tg_2\|_\Gamma \leq \mu \|g_1 - g_2\|_\Gamma\]
where \(\mu < 1\).
\end{lemma}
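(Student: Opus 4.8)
The plan is to fix an arbitrary target point $p \in \Rd$ and compare the two preimages. By Lemma \ref{lemma:well_defined}, for each $i \in \{1,2\}$ there is a unique $\xi_i \in \Rd$ with $p = \xi_i + h z_{g_i}(\xi_i)$, and then by \eqref{eq:p}, \eqref{eq:Tg} we have $(Tg_i)(p) = \lambda g_i(\xi_i) + a I^{(1)}_{g_i}(\xi_i) - \bar{\lambda} I^{(2)}_{g_i}(\xi_i)$. The key preliminary step is to control $|\xi_1 - \xi_2|$ in terms of $\|g_1-g_2\|_\Gamma$. Since $\xi_1 - \xi_2 = h\bigl(z_{g_2}(\xi_2) - z_{g_1}(\xi_1)\bigr)$, I would split this as $h\bigl(z_{g_2}(\xi_2) - z_{g_2}(\xi_1)\bigr) + h\bigl(z_{g_2}(\xi_1) - z_{g_1}(\xi_1)\bigr)$ and invoke the two relevant estimates of Lemma \ref{lemma:easy_bounds}, namely $|z_{g_2}(\xi_2) - z_{g_2}(\xi_1)| \le (\lambda K_2 + B_1(1+haK_2))|\xi_1-\xi_2|$ and $|z_{g_2}(\xi_1) - z_{g_1}(\xi_1)| \le h(\lambda + haB_1)\|g_1-g_2\|_\Gamma$, to obtain
\[
|\xi_1 - \xi_2| \le Q_3\,|\xi_1 - \xi_2| + h^2(\lambda + haB_1)\,\|g_1-g_2\|_\Gamma .
\]
Since Assumption \ref{assump:h_small} guarantees $Q_3 < 1$, the first term is absorbed and $|\xi_1 - \xi_2| \le \frac{h^2(\lambda + haB_1)}{1-Q_3}\|g_1-g_2\|_\Gamma$.

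Next I would bound $|(Tg_1)(p) - (Tg_2)(p)|$ term by term. Writing the difference as $\lambda\bigl(g_1(\xi_1) - g_2(\xi_2)\bigr) + a\bigl(I^{(1)}_{g_1}(\xi_1) - I^{(1)}_{g_2}(\xi_2)\bigr) - \bar{\lambda}\bigl(I^{(2)}_{g_1}(\xi_1) - I^{(2)}_{g_2}(\xi_2)\bigr)$, I split each bracket into a ``change of function at a fixed argument'' piece plus a ``change of argument for a fixed function'' piece; for instance $g_1(\xi_1) - g_2(\xi_2) = \bigl(g_1(\xi_1) - g_2(\xi_1)\bigr) + \bigl(g_2(\xi_1) - g_2(\xi_2)\bigr)$, bounded respectively by $\|g_1-g_2\|_\Gamma$ (definition of $\Gamma$) and $\delta|\xi_1-\xi_2|$ (the $\delta$-Lipschitz property), and similarly for $I^{(1)}$ and $I^{(2)}$ using the corresponding Lipschitz-in-$g$ and Lipschitz-in-argument bounds of Lemma \ref{lemma:easy_bounds}. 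Collecting coefficients, the terms multiplying $\|g_1-g_2\|_\Gamma$ directly sum to exactly $\lambda + Q_2$, while the terms multiplying $|\xi_1-\xi_2|$ sum to exactly $Q_1$; this is precisely how $Q_1,Q_2$ are defined in \eqref{eq:contraction2_const}.

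Combining the two estimates gives
\[
|(Tg_1)(p) - (Tg_2)(p)| \le \Bigl(\lambda + Q_2 + \frac{h^2(\lambda + haB_1)Q_1}{1-Q_3}\Bigr)\|g_1-g_2\|_\Gamma = \mu\,\|g_1-g_2\|_\Gamma ,
\]
and taking the supremum over $p \in \Rd$ yields the claim, with $\mu < 1$ by Assumption \ref{assump:h_small}. The main obstacle is the implicit dependence of $\xi$ on $g$ through \eqref{eq:p}: a fixed target $p$ has different preimages $\xi_1,\xi_2$ under $z_{g_1}$ and $z_{g_2}$, so $Tg_1$ and $Tg_2$ cannot be compared pointwise until the contraction-type estimate on $|\xi_1-\xi_2|$ is in hand, and it is exactly there that $Q_3<1$ is used. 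Everything downstream is routine bookkeeping with the triangle inequality and the uniform bounds of Lemma \ref{lemma:easy_bounds}.
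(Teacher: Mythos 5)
Your proposal is correct and follows essentially the same argument as the paper's proof: the same splitting of each difference into a change-of-function piece plus a change-of-argument piece, the same use of Lemma \ref{lemma:easy_bounds} and $Q_3<1$ to control $|\xi_1-\xi_2|$, and the same assembly of $\lambda+Q_2$, $Q_1$, and $\mu$. The only difference is the (immaterial) order in which the two estimates are derived and your use of $\|g_1-g_2\|_\Gamma$ in place of $|g_1(\xi_1)-g_2(\xi_1)|$ followed by a supremum.
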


Now we prove these three lemmas. 

\begin{proof}[of Lemma \ref{lemma:well_defined}.]
Consider the iteration of the form
\[\xi^{k+1} = p - h z_g(\xi^k).\]
For any two sequences \(\{\xi^k\}\), \(\{\eta^k\}\) generated by this iteration we have, by Lemma \ref{lemma:easy_bounds},
\begin{align*}
|\xi^{k+1} - \eta^{k+1}| &\leq h |z_g(\eta^k) - z_g(\xi^k)| \\
&\leq h(\lambda K_2 + B_1(1+haK_2))|\xi^k - \eta^k| \\
&= c |\xi^k - \eta^k|
\end{align*}
which is a contraction by \eqref{eq:contraction1_const}.
\end{proof}

\begin{proof}[of  Lemma \ref{lemma:gamma_to_gamma}.]
Let \(g \in \Gamma\) and \(p \in \R^d\) then by Lemma \ref{lemma:well_defined} there is a unique \(\xi \in \R^d\) such 
that \eqref{eq:p} is satisfied. Then 
\begin{align*}
|(Tg)(p)| &\leq \lambda |g(\xi)| + a |I^{(1)}_g(\xi)| + \tilde{\lambda} |I^{(2)}_g(\xi)| \\
&\leq \lambda \gamma  + a B_1(\tilde{\lambda}B_0 + h \gamma) + \tilde{\lambda}B_1(\lambda(\tilde{\lambda}B_0 + h\gamma) + B_0) \\
&= (\lambda + hB_1(a + \lambda \tilde{\lambda})) \gamma  + \tilde{\lambda}B_0B_1(a + \tilde{\lambda}) \\
&\leq \gamma
\end{align*}
with the last inequality following from \eqref{eq:gamma_bound}.

Let \(p_1, p_2 \in \R^d\) then, by Lemma \ref{lemma:well_defined}, there exist \(\xi_1,\xi_2 \in \R^d\)
such that \eqref{eq:p} is satisfied with \(p=\{p_1,p_2\}\). Hence, by Lemma \ref{lemma:easy_bounds},
\begin{align*}
|(Tg)(p_1) - (Tg)(p_2)| &\leq \lambda |g(\xi_1) - g(\xi_2)| + a |I^{(1)}_g(\xi_1) - I^{(1)}_g(\xi_2)| + \tilde{\lambda} |I^{(2)}_g(\xi_1) - I^{(2)}_g(\xi_2)| \\
&\leq K |\xi_1 - \xi_2|
\end{align*}
where we define 
\[
K \coloneqq \lambda \delta + a( B_1 K_2 + B_2 K_1 (1 + haK_2 )) + \tilde{\lambda} ( ( B_1 + hB_2K_3 )(\lambda K_2 + B_1 (1+ haK_2)) + B_2K_3  ).
\]
Now, using \eqref{eq:p} and the proof of Lemma \ref{lemma:well_defined},
\begin{align*}
|\xi_1 - \xi_2| &\leq |p_1 - p_2| + h |z_g(\xi_1) - z_g(\xi_2)| \\
&\leq |p_1 - p_2| + c |\xi_1 - \xi_2|.
\end{align*}
Since \(c < 1\) by \eqref{eq:contraction1_const}, we obtain
\[|\xi_1 - \xi_2| \leq \frac{1}{1-c}|p_1-p_2|\]
thus 
\[|(Tg)(p_1) - (Tg)(p_2)| \leq \frac{K}{1-c}|p_1-p_2| \leq \delta |p_1 - p_2|.\]
To see the last inequality, we note that
\[\frac{K}{1-c} \leq \delta \iff K - \delta (1-c) \leq 0\]
and \(K - \delta (1-c) = \alpha_2 \delta ^2 + \alpha_1 \delta + \alpha_0\) by \eqref{eq:alpha_const} hence \eqref{eq:delta_bound} gives 
the desired result.
\end{proof}

\begin{proof}[of Lemma \ref{lemma:T_contraction}.]
By Lemma \ref{lemma:well_defined}, for any \(p \in \R^d\) and \(g_1,g_2 \in \Gamma\), there are
\(\xi_1,\xi_2 \in \R^d\) such that
\begin{align*}
p &= \xi_j + h z_{g_j}(\xi_j)\\
(Tg_j)(p) &= \lambda g_j (\xi_j) + a I^{(1)}_{g_j}(\xi_j) - \tilde{\lambda} I^{(2)}_{g_j}(\xi_j)
\end{align*}
for \(j=1,2\). Then
\[|(Tg_1)(p) - (Tg_2)(p)| \leq \lambda |g_1(\xi_1) - g_2(\xi_2)| + a |I^{(1)}_{g_1}(\xi_1) - I^{(1)}_{g_2}(\xi_2)| + \tilde{\lambda}|I^{(2)}_{g_1}(\xi_1) - I^{(2)}_{g_2}(\xi_2)|.\]
Note that
\begin{align*}
|g_1(\xi_1) - g_2(\xi_2)| &= |g_1(\xi_1) - g_2(\xi_2) - g_2(\xi_1) + g_2(\xi_1)| \\
&\leq |g_1(\xi_1) - g_2(\xi_1)| + \delta|\xi_1 - \xi_2|.
\end{align*}
Similarly, by Lemma \ref{lemma:easy_bounds},
\begin{align*}
|I^{(1)}_{g_1}(\xi_1) - I^{(1)}_{g_2}(\xi_2)| &= |I^{(1)}_{g_1}(\xi_1) - I^{(1)}_{g_2}(\xi_2) - I^{(1)}_{g_2}(\xi_1) + I^{(1)}_{g_2}(\xi_1)| \\
&\leq |I^{(1)}_{g_1}(\xi_1) - I^{(1)}_{g_2}(\xi_1)| + |I^{(1)}_{g_2}(\xi_1) - I^{(1)}_{g_2}(\xi_2)| \\
&\leq h ( B_1 + haB_2 K_1)|g_1(\xi_1) - g_2(\xi_1)| + ( B_1 K_2 + B_2 K_1 (1 + haK_2 ))|\xi_1 - \xi_2|
\end{align*}
Finally,
\begin{align*}
|I^{(2)}_{g_1}(\xi_1) - I^{(2)}_{g_2}(\xi_2)| &= |I^{(2)}_{g_1}(\xi_1) - I^{(2)}_{g_2}(\xi_2) - I^{(2)}_{g_2}(\xi_1) + I^{(2)}_{g_2}(\xi_1)| \\
&\leq |I^{(2)}_{g_1}(\xi_1) - I^{(2)}_{g_2}(\xi_1)| + |I^{(2)}_{g_2}(\xi_1) - I^{(2)}_{g_2}(\xi_2)| \\
&\leq h(\lambda + hB_1a)(B_1 + hB_2K_3)|g_1(\xi_1) - g_2(\xi_1)| + \\
&+( ( B_1 + hB_2K_3 )(\lambda K_2 + B_1 (1+ haK_2)) + B_2K_3  )|\xi_1 - \xi_2|
\end{align*}
Putting these together and using \eqref{eq:contraction2_const}, we obtain
\[|(Tg_1)(p) - (Tg_2)(p)| \leq ( \lambda + Q_2 )|g_1(\xi_1) - g_2(\xi_1)| + Q_1 |\xi_1 - \xi_2|. \]
Now, by Lemma \ref{lemma:easy_bounds},
\begin{align*}
|\xi_1 - \xi_2| &\leq h |z_{g_1}(\xi_1) - z_{g_2}(\xi_2) - z_{g_2}(\xi_1) + z_{g_2}(\xi_1)| \\
&\leq h (|z_{g_1}(\xi_1) - z_{g_2}(\xi_1)| + |z_{g_2}(\xi_1) - z_{g_2}(\xi_2)|) \\
&\leq h^2(\lambda + haB_1)|g_1(\xi) - g_2(\xi_1)| + h(\lambda K_2 + B_1(1+haK_2))|\xi_1 - \xi_2| \\
&= h^2(\lambda + haB_1)|g_1(\xi) - g_2(\xi_1)| + Q_3|\xi_1 - \xi_2|
\end{align*}
using \eqref{eq:contraction2_const}. Since, by \eqref{eq:contraction}, \(Q_3 < 1\), we obtain
\[|\xi_1 - \xi_2| \leq \frac{h^2(\lambda + haB_1)}{1-Q_3} |g_1(\xi_1) - g_2(\xi_1)|\]
and thus
\begin{align*}
|(Tg_1)(p) - (Tg_2)(p)| &\leq \left ( \lambda + Q_2 + \frac{h^2(\lambda + haB_1)Q_1}{1-Q_3} \right ) |g_1(\xi_1) - g_2(\xi_1)| \\
&= \mu |g_1(\xi_1) - g_2(\xi_1)|
\end{align*}
by \eqref{eq:contraction2_const}. Taking the supremum over \(\xi_1\) then over \(p\) gives the desired result.
Since \(\mu < 1\) by \eqref{eq:contraction}, we obtain that \(T\) is a contraction on \(\Gamma\).
\end{proof}

\section*{Appendix D}
\label{sec:AD}

We consider the equation
\begin{align*}
&\ddot{u} + 2 \sqrt{\mu} \dot{u} + \nabla \Phi(u) = 0 \\
&u(0) = \su_0, \quad \dot{u}(0) = \sv_0.
\end{align*}
Set \(v = \dot{u}\) then we have
\[
\begin{bmatrix}
\dot{u} \\
\dot{v}
\end{bmatrix}
=
\begin{bmatrix}
v \\
-2\sqrt{\mu} v - \nabla \Phi(u)
\end{bmatrix}.
\]
Define the maps
\[f_1(u,v) \coloneqq \begin{bmatrix}
v \\
-2 \sqrt{\mu} v
\end{bmatrix},
\quad
f_2(u,v) \coloneqq \begin{bmatrix}
0 \\
- \nabla \Phi(u)
\end{bmatrix}\]
then
\[
\begin{bmatrix}
\dot{u} \\
\dot{v}
\end{bmatrix}
= f_1(u,v) + f_2(u,v).
\]
We first solve the system
\[
\begin{bmatrix}
\dot{u} \\
\dot{v}
\end{bmatrix}
= f_1(u,v).
\]
Clearly 
\[v(t) = e^{-2\sqrt{\mu}t} \sv_0\]
hence
\begin{align*}
u(t) &= \su_0 + \int_0^t e^{-2\sqrt{\mu}s} \sv_0 \: ds \\
&= \su_0 + \frac{1}{2 \sqrt{\mu}}  \left( 1 - e^{-2\sqrt{\mu}t} \right) \sv_0.
\end{align*}
This gives us the flow map
\[
\psi_1(\su, \sv; t) = \begin{bmatrix}
\su + \frac{1}{2 \sqrt{\mu}}  \left( 1 - e^{-2\sqrt{\mu}t} \right) \sv \\
e^{-2\sqrt{\mu}t} \sv
\end{bmatrix}.
\]
We now solve the system
\[
\begin{bmatrix}
\dot{u} \\
\dot{v}
\end{bmatrix}
= f_2(u,v).
\]
Clearly
\[u(t) = \su_0\]
hence
\[v(t) = \sv_0 - t \nabla \Phi(\su_0).\]
This gives us the flow map
\[\psi_2(\su,\sv;t) = \begin{bmatrix}
\su \\
\sv - t \nabla \Phi(\su)
\end{bmatrix}.
\]
The composition of the flow maps is then
\[(\psi_2 \circ \psi_1)(\su, \sv; t) = \begin{bmatrix}
\su + \frac{1}{2 \sqrt{\mu}}  \left( 1 - e^{-2\sqrt{\mu}t} \right) \sv \\
e^{-2\sqrt{\mu}t} \sv - t \nabla \Phi \left( \su + \frac{1}{2 \sqrt{\mu}}  \left( 1 - e^{-2\sqrt{\mu}t} \right) \sv \right)
\end{bmatrix}.\]
Mapping \(t\) to the time-step \(\sqrt{h}\) gives the numerical method \eqref{eq:wilson_disc}.

\vskip 0.2in
\bibliography{references}

\end{document}